\newif\ifdraft
\newif\ifaftersub
\def\eqref#1{equation~\ref{#1}}
\def\1{\bm{1}}
\def\eps{{\epsilon}}
\DeclareMathAlphabet{\mathsfit}{\encodingdefault}{\sfdefault}{m}{sl}
\SetMathAlphabet{\mathsfit}{bold}{\encodingdefault}{\sfdefault}{bx}{n}
\newcommand{\E}{\mathbb{E}}
\newcommand{\sigmoid}{\sigma}
\newcommand{\Var}{\mathrm{Var}}
\newtheorem{theorem}{Theorem}[section]
\newtheorem{lemma}[theorem]{Lemma}
\newtheorem{proposition}[theorem]{Proposition}
\theoremstyle{definition}
\newtheorem{definition}{Definition}[section]
\renewcommand\E[1]{\mathbb{E}\left[\,#1\,\right]}
\renewcommand\sigmoid{\text{sigmoid}\xspace}
\newcommand\relu[1]{ReLU\left(\,#1\,\right)}
\newcommand\contmodel{\operatorname{Continuous Model}\xspace}
\newcommand\binmodel{\textsc{Federated\ Zampling}\xspace}
\newcommand\localmodel{\textsc{Local\ Zampling}\xspace}
\newcommand{\smallarch}{\textsc{small architecture }}
\newcommand{\mnistfc}{\textsc{mnistfc }}
\title[Trading-off Accuracy and Communication Cost in Federated Learning]{Trading-off Accuracy and Communication Cost in Federated Learning}
\author{Mattia Jacopo Villani}
\affiliation{
  \institution{King's College London}
  \city{London}
  \country{United Kingdom}}
\email{mattia.villani@kcl.ac.uk}
\author{Emanuele Natale}
\affiliation{
  \institution{Université Côte d’Azur}
  \city{Sophia-Antipolis}
  \country{France}}
\email{emanuele.natale@univ-cotedazur.fr}
\author{Frederik Mallmann-Trenn}
\affiliation{
  \institution{King's College London}
  \city{London}
  \country{United Kingdom}}
\email{frederik.mallmann-trenn@kcl.ac.uk}
\date{}
\begin{document}

\begin{abstract}
    Leveraging the training-by-pruning paradigm introduced by Zhou et al. [NeurIPS'19], Isik et al. [ICLR'23] introduced a federated learning protocol that achieves a 34-fold reduction in communication cost.
    We achieve a compression improvements of orders of orders of magnitude over the state-of-the-art.
    The central idea of our framework is to encode the network weights $\vec w$ by a the vector of trainable parameters $\vec p$, such that $\vec w = Q\cdot \vec p$ where $Q$ is a carefully-generate sparse random matrix (that remains fixed throughout training).
    In such framework, the previous work of Zhou et al. [NeurIPS'19] is retrieved when $Q$ is diagonal and $\vec p$ has the same dimension of $\vec w$.
    
    We instead show that $\vec p$ can effectively be chosen much smaller than $\vec w$, 
    while retaining the same accuracy at the price of a decrease of the sparsity of $Q$. 
    Since server and clients  only need to share $\vec p$, such a trade-off leads to a substantial improvement in communication cost.
    Moreover, we provide theoretical insight into our framework and establish a novel link between training-by-sampling and random convex geometry.

    
\end{abstract}

\keywords{training-by-pruning, communication-efficient federated learning,  parameter sharing, networks with random weights, compression, random convex geometry, zonotopes}
\maketitle



\section{Introduction}

In many scenarios, training cannot be done on a single machine, either due to the large size of the dataset or privacy concerns that prevent direct data sharing (e.g., hospital data). To enable efficient and secure training on mobile devices, \emph{federated learning} was introduced \cite{firstfed,fedfirst2}. In this approach, multiple agents or clients train on separate partitions of the data, periodically sharing learned parameters with a central server. The server aggregates these parameters and distributes the updated model back to all clients.

In this server-client framework 
the training data is split across many clients who, with the help of a central server, jointly train a model without ever sharing data directly. 
A well-known example is \cite{keyboard}, which demonstrates that federated learning for next-word prediction on virtual keyboards significantly improved recall compared to centralized training on lower-quality data. Furthermore, the distributed computation across clients can accelerate training.

However, the main issue of this framework is that it requires the model (i.e., all learnable parameters) to be frequently shared with all clients resulting in a very high communication cost. Communicating large amounts of data slows down training significantly.   
To address this issue \cite{isik} proposed a novel framework that allows the clients to simply share one bit (rather than a float) per parameter of the model. This results in a 32-fold reduction in communication cost.\footnote{Their model still requires the server to share the a float for every parameter.}
Their work builds on the training-by-pruning paradigm introduced by \cite{zhou}. 
In this paradigm, instead of applying gradient descent on a fixed collection of weights, weights are randomly initialised and removed with some probability that changes adaptively. 
Specifically, each randomly initialised parameter (weight) $w_i$ has an associated probability $p_i$ of being used in the neural network architecture. 
During training, we sample every weight, and with probability $1-p_i$, the weight gets a value of 0. 
The goal then is to use gradient-based methods to train the distribution of $\vec{p}$ for each $p_i$.\footnote{Technically, one trains via SGD a score value $s_i$ from when then the $p_i$ is calculated: $p_i\sigmoid(s_i)$ to ensure that the values of $p_i$ are in $[0,1]$.} so that the sampled network has minimal loss. 
Surprisingly, this seems to work very well in practice, leading to the development of other variants, such as, the Edge-PopUp algorithm by \cite{ramanujan}. 

In federated learning, transmitting binary masks instead of exact parameter values not only reduces communication cost but also enhances privacy. Additionally, the sparse network architecture lowers inference costs. Using this technique, \cite{isik} achieved high accuracy in training artificial neural networks (ANNs) with a 32-fold communication reduction. They further applied compression techniques that capitalize on patterns of consecutive 1s or 0s, yielding a total communication reduction of 33-34 times.

Our paper extends far beyond this 34-fold reduction in communication cost, achieving a 1024-fold total reduction. 
The idea is that the clients and server jointly (from a shared random seed) initialise a sparse \emph{coefficient  matrix} $Q$ where each row has $d$-non zero parameters. 

The trainable parameters are given by a $n$-dimensional probability vector $\vec p$ with $n \ll m$, where $m$ is the total number of parameters of the model. 
Each  client $k$ samples a binary vector $z^{(k)} \sim Bern(\vec p)$, where $Bern(\vec p)$ is the component-wise Bernoulli distribution.
Each client then
and calculates its weights as follows
$\vec w^{(k)} = Qz^{(k)}$.\footnote{Note that this model is different from parameter sharing as potentially all weights/parameters can be different. 
There is some correlation between each parameter, but ultimately it fairly weak and results in high accuracies as we show in our experiments (see \Cref{sec:experiments}).
}
After calculating the local gradient, each client then updates $\vec p$ to $\vec p_{new}$ and samples once more $z_{new}^{(k)} \sim Bern(\vec p_{new})$ before sending it to the server.
Crucially, sending $z_{new}^{(k)}$ takes at most $n$ bits
which can be orders of magnitude smaller than $m$ (see \cite{isik})\footnote{As mentioned before, they also use some other forms of compression when the binary vector has many $0$s or $1s$, but the savings are negligible in the grand scheme of things.} and much smaller than sending float values to the server which would cost $32m$. 
Fascinatingly, when sending $n$ bits vs $32m$ bits we only see (\Cref{fig:isik}) a very small decrease in accuracy up to a certain threshold of $n$. 
When $n$ becomes too small, the model starts to deteriorate (\Cref{fig:compression}). However, for example in the case of MNIST, $n$ can be smaller than $m$ by a factor of $32$ and we only lose a $7$ percent points of accuracy - giving a total compression of $32 \cdot 32\approx 1024$. 

In some sense, the framework of \cite{isik} (building on training-by-pruning from \cite{zhou}) is a special case of our model when $d=1$ and $n=m$.\footnote{There are some minor difference, technically $Q$ would be a diagonal matrix and the scores are calculated using the sigmoid function rather than our clipping, but overall it is virtually the same. Crucially, since $n=m$ no further compression can be obtained. } It turns out that allowing larger values of $d$ increases accuracy. In \Cref{sec:d} we discuss the reasons for this. Typically, in federeated learning, the communication complexity is measure by the communication each clients needs to send, but it's easy to see that a low server communication cost can also be crucial. In \cite{isik} and our main algorithm, the server still broadcasts the entire model. However, in our approach that cost is only $32n$ vs $32m$, which, as argued above, can be smaller by an order of magnitude without losing too much accuracy.

Although we frame our results in the setting of federated learning, the model itself is also interesting to understand the impact dimension reduction w.r.t. the learnable parameters. For this reason we also introduce the $\localmodel$ algorithm  in \Cref{sec:model} a simplified local version of the algorithm outside of the context of federated learning. Here, we study the generalisation of the learned vector $\vec p^*$.
Finally, we complement our experiments with theoretical results on the rich convex random geometry (namely zonotopes) induced by our framework.


\subsection{Our Contribution}

We present \textsc{Zampling} (Zonotope Sampling), a new training-by-sampling framework inspired by convex random geometry that achieves small reductions in accuracy for state of the art factors of compression in communication cost in the federated learning setting.
Given a an arbitrary neural network architecture, \textsc{Zampling} replaces the model's parameters $\vec{w}$ with a product of a probability vector $\vec{p}$ and a sparse influence matrix $Q$, enabling both training-by-sampling for any model and state of the art compression in parameter communication costs in the federated setting. 
Our framework enables large parameter compression that lowers the communication cost by several orders of magnitude in the context of Federated Learning: \textsc{Federated Zampling}. 
Moreover, our work is a generalisation of Zhou et. al. \cite{zhou} and sheds novel theoretical insight on the generalisation benefits from training-by-sampling. 
%
We now give an overview of our contributions.

\textbf{Highly Compressed Communication Cost in Federated Learning} Our main result
shows that we can reduce the client communication cost by a factor of 1024 in comparison to the naïve algorithm (and a factor of 32 w.r.t. the state-of-the-art)
 while witnessing only a 3\% point reduction in accuracy. 
We also  provide theoretical insights into why federated learning is helpful in our setting (\Cref{sec:fedswithbenefits}).
Moreover, we achieve a fine-grained Trade-off between accuracy and compression for $\localmodel$. 
In \Cref{sec:compression_Exp} we perform numerical experiments on a small architecture (to avoid redundancy in parameters). 
We show a relatively smooth trade-off between accuracy and compression factor. Our results also show that increasing $d$ slightly benefits higher accuracy, but large values of $d$ yield no additional benefits. 


\textbf{Theoretical Foundations of Training-by-Sampling and a Novel Link to Random Convex Geometry}. 
We  provide theoretical evidence on the benefits of deploying training-by-sampling algorithms in the federated setting (see \Cref{sec:theory}). 
First we show what the impact of $d$ is. We argue in \Cref{pro:cherrypickD} that the expected maximum norm of $Q_i \vec p$ scales as $\Theta(\sqrt{d/n_\ell})$ where the maximum is taken over $\vec p$. This implies that for larger values of $d$ the model can reach larger weights. We also show characterize the number of $0$ entries in $\vec w(0)$ and $\vec 0$ columns in $Q$ that can arise for small $d$ and may limit the expressivity of the model.
See \Cref{sec:d} for the details.
In \Cref{sec:convex} we develop a novel link with random convex geometry that allows us to prove statements about the generalisation capabilities of training-by-sampling. 
This allows us then to prove the benefits on federated learning has when it comes to learning by sampling (see \Cref{sec:fedswithbenefits}).

Through our formalism (\Cref{sec:convex}), and experimentally (\Cref{sec:sensitivity_exp}), we provide evidence that training-by-sampling improves generalisation.
This is because our method can be described as searching for good solutions on the vertices of a convex shape known as a zonotope. 
Through training, the algorithm finds a sub-zonotope where performance is good. 
Even though our training method only measures performance at the vertices of this shape, we show experimentally that good performance is maintained in all the region characterised by the shape, comparing it to gradient-based training \Cref{sec:sensitivity_exp}. 
Finally, our findings (see \Cref{fig:integralitygap}) show that when training the sampled network instead of simply training  $\vec p$ directly without ever sampling, then the network does not generalize as well and suffers a huge drop of accuracy, to which we refer as the \emph{integrality gap}.

\subsection{Related Work}

There is long line of research on communication efficient federated learning. The authors of \citep{aji2017sparse} achieve a reduction in communication cost by only exchanging the largest gradients. In a similar spirit, 
\cite{barnes2020rtop} also estimates the SGD update and \cite{lin2017deep} sparsifies the gradients to optimize for bandwidth usage.
\cite{isik2022information} present an algorithm and theoretical foundations on sparsificaiton using successive refinement.
Other methods that build on low-rank approximation include \cite{wang2018atomo} and \cite{basat2022quic}.
Our work extends this by showing that an even smaller set of parameters can be shared while maintaining accuracy.

Other approaches to reducing communication costs include parameter sharing schemes like those proposed by ~\cite{chen2015compressing} and ~\cite{barnes2020rtop}, which aim to minimize communication through compressed updates or estimating SGD steps. Similarly, ~\cite{theis2022lossy} explored the use of shared randomness to achieve lossless communication compression. 
Our approach draws on similar principles but extends these ideas by combining the use of sparse random matrices and convex geometry insights.

Up to minor difference, that the special case $n=m$ and $d=1$ is akin to the training-by-pruning algorithm considered in \cite{zhou}. 
Said algorithm motivated the Strong Lottery Ticket Hypothesis \cite{pensiaOptimalLotteryTickets2020}, and other training-by-pruning algorithms such as Edge-PopUp \cite{ramanujanWhatHiddenRandomly2020}; finding lottery tickets has been shown to be computational difficult
\cite{gadhikarWhyRandomPruning2023,otsukaPartialSearchFrozen2024}.
It also found applications in federated learning for distributed training of neural networks with reduced communication cost \cite{isik}.

Recent papers demonstrate a strong focus on advancing federated learning techniques, particularly in contexts involving data heterogeneity and decentralized environments. 
For example, the AAMAS papers (2023-2024) ~\cite{aamas2023} introduces attention mechanisms to address the aggregation challenges inherent in federated reinforcement learning across heterogeneous agents.
Similarly, the study~\cite{aamas2024} explores methods to handle non-i.i.d. data by clustering clients with similar distributions prior to aggregation, thereby improving model accuracy and robustness. 

Finally, if side information is available, the authors of \cite{isikAdaptiveCompressionFederated2024} use sampling to achieve a 2650-fold  compression in the context of federated learning.
Moreover, shared randomness has been successfully used to reduce the communication cost, e.g., \cite{canonne2015communication,kurri2021coordination,isik2024exact}.
Our method achieves a 1024-fold reduction in communication cost. 

\subsection{Our Framework: \textsc{Zampling}}\label{sec:model}

We consider the setting of federated learning, where a server and $
K$ clients jointly train a neural network model. 
For any neural network architecture, let $m$ be the total number of parameters of the model and let $n$ be a number of \emph{trainable parameters} with $n\leq m$.  Let $\vec p(t)  \in [0,1]^n$ be the vector of these parameter at time $t$ and we refer to it as the \emph{probability distribution vector}. 

Let $Q=(q_{i,j})_{i\leq m, j\leq n}$ be a randomly initialised but non-trainable matrix $\in \mathbb{R}^{m\times n}$ that describes how each trainable parameter (in $\vec p$) affects each weight, i.e., $q_{i,j}$ describes how the $j$th trainable parameter influences the $i$th weight. Let $d$ the weight degree (each weight is influenced by $d$ trainable parameters), i.e., the number of non-zero entries per row. The matrix $Q$ does not change over time and will never be sent --- we assume that server and clients both have $Q$ which can be realised by sharing the same random seed to generate identical matrices. We assume that the data is distributed IID among the clients.

\paragraph{Further Notation}
Let $f(x)=\max(\min(x,1),0)$ be the ReLU function clipped at $1$.
Let $D_{(k)}$ be the dataset at agent $k \leq K, k \in \mathbb{N}$ and $D=\bigcup_{k \leq K} D_{(k)}$.
Given a weight vector $\vec w$, we use \( g_{\vec w}: X\rightarrow Y \) to describe the resulting network (note that server and clients use the same architecture and hence the weight vector fully determines the model).
We define the \emph{compression factor} to be $m/n$. In our terminology each \emph{round} has  up to 100 (training) epochs.  Clients and server exchange messages at the beginning and end of each round. 

\paragraph{Initialization}
We generate a coefficient matrix $Q\in \mathbb{R}^{m \times n}$. For each $i\leq m$ sample a set of $d$ indices $\mathcal{I}_{i} \in [n]^d$ without replacement. 
Then generate $Q$ as follows.
\begin{equation}
q_{i,j}  \sim
\begin{cases} N(0, \sigma_i^2) & \text{ if $j \in \mathcal{I}_i$} \\
0 & \text{ otherwise}
\end{cases},
\end{equation}
where $\sigma_i^2 = \frac{6}{d n_\ell}$ and $n_\ell$ is the fan-in (number of incoming weights) of the target neuron associated to weight $w_i$ (see \Cref{lem:init} where we show that this results in a Kaiming-He initialization).

The initial values of $\vec p$ are drawn from an $n$-dimensional uniform distribution $\vec p(0)\sim U(0,1)^n $.
The initial values of the weights are now calculated by setting $\vec w_{init} = \vec w(0) = Q \vec p(0)$.

\paragraph{$\localmodel$ (Centralized Version)}
We start by describing the $\localmodel$ framework/algorithm, which trains the model locally on one central machine (not in the federated learning framework). 
In each round $t$:
\begin{itemize}  
 \item Set
$\vec s(t) =  \vec p(t)$
    \item Sample the binary mask $\vec z(t) \in \{ 0,1\}^n$: $ \vec z(t) \sim \text{Bernoulli}(\vec p(t))$ 
    \item Calculate the weights vector\footnote{We assume that clients and server have the same mapping of entries in this vector to weights in the architecture.} $\vec w(t) = Q \cdot \vec z(t)$.

\item $\vec w(t)$ is used for the forward pass, and the loss $\mathcal{L}(g_{w(t)}, D)$  is backpropagated to update the score mask  as $\vec s(t+1) = \vec s(t) - \eta \nabla \mathcal{L}(g_{w(t)}, D)$ ($\eta$ is the  learning rate).
\end{itemize}

Note that this training method ensures that the sampled networks achieve good performance, as we discuss in \Cref{sec:experiments}; this is not the case if we train the same architecture traditionally (without sampling throughout training), as we discuss in \Cref{app:initialisation}, where we relate the sampled versus non-sampled gap in the network's performance to the initialisation of $\vec{p}$. 

\begin{figure}[ht!]
\includegraphics[width=0.35\textwidth]{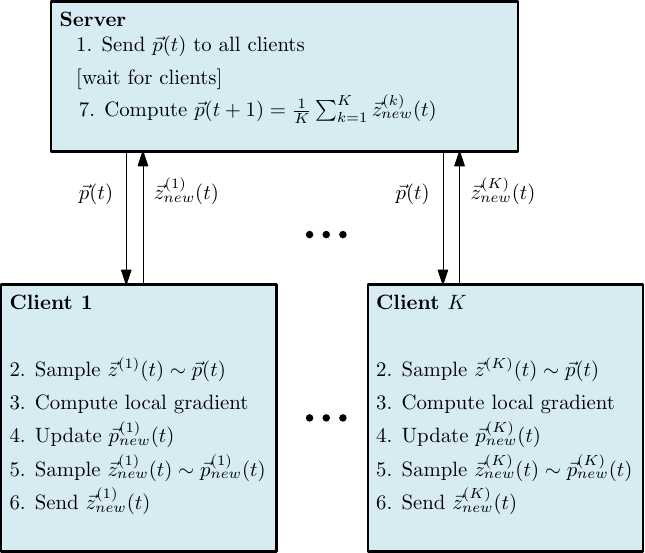}\Description{An illustration of the FedZample algorithm.}
\label{fig:binmodel}
\caption{An illustration of the \binmodel algorithm.}
\end{figure}

\paragraph{$\binmodel$ (Federated Learning Version)} 
We now describe the $\binmodel$ (see \Cref{fig:binmodel} for an illustration of the protocol). For the initialisation, it is important that both server and clients have $Q$. This can be realised by building $Q$ through using a shared random number generator.  
In each round $t$:

\begin{itemize}  
    \item  The server sends $\vec p(t)$ to all its clients (as a float).

    \item Each client $k$ calculates
$\vec s^{(k)}(t) = \vec p{(t)}$
    \item  Each client $k$ samples the binary mask $\vec z^{(k)}(t) \in \{ 0,1\}^n$ as follows $ \vec z(t) \sim \text{Bernoulli}(\vec p(t))$ 
    \item Each client  calculates it's local weights as a vector\\ $\vec w^{(k)}(t)= Q \cdot \vec z^{(k)}(t) $

\item $\vec w{(t)}$ is used for the forward pass, and the loss $\mathcal{L}(g_{w^{(k)}(t)}, D_k)$ on the local task is backpropagated
to update the score mask at client $k$ as as  $\vec s_{new}^{(k)}(t) = \vec s^{(k)}(t) - \eta \nabla \mathcal{L}(g_{w^{(k)}{(t)}}, D_k)$ ($\eta$ is the local learning rate).

\item  Each client $k$  converts the score back to probabilities: \\
$\vec p_{new}^{(k)}(t)= f\left(\vec s_{new}^{(k)}(t) \right)$
\item  Each client  samples from this $\vec z_{new}^{(k)}(t)\sim \text{Bernoulli}(\vec p_{new}^{(k)}(t))$ 
and sends $\vec z_{new}^{(k)}(t)$ to the server.
\item The server  calculates $\vec  p(t+1)= \frac{1}{K} \sum_{i=k}^K \vec z^{(k)}(t)$.
\end{itemize}

Sometimes we will also compare to the $\contmodel$, where no sampling happens, i.e., $\vec w^{(k)}(t)=Q \vec p(t)$ instead of $\vec w^{(k)}(t)=Q \vec z^{(k)}(t)$. The rest is exactly the same - including how the gradients are updated, i.e., 
$\nabla_{\vec{s}} L = \left( \nabla_{\vec{w}} L \odot Q \right) \odot \mathbf{1}_{\{ 0 < \vec{p} < 1 \}}$. 

\section{Theory}\label{sec:theory}

In this section we provide a thorough theoretical analysis of the capabilities of our models, using probabilistic methods and tools from convex random geometry. 
First, we argue that it is possible to recover Kaiming-He initialisation \cite{he}, making the parametrisation of this model comparable to that of arbitrary architectures from a functional perspective. 
Secondly, we explore the effect of the degree and size parameters $d, n \in \mathbb{N}$. 
Increasing $d$ provides performance benefits, at the cost of a less sparse matrix. 
\Cref{sec:d} is dedicated to explaining the performance benefits of $d$.

\begin{lemma}\label{lem:init}
Let the nonzero entries of the influence matrix  $Q$  be distributed as:

\[q_{i,j} \sim \mathcal{N}\left(0, \frac{6}{d n_l}\right).\]

Let $ p_j \sim \text{s-dist}[0,1], \; j = 1, \dots, n$, be independent and identically distributed (i.i.d.), where  \text{s-dist}[0,1]  is a symmetric distribution with support in [0, 1]. Define the vector  $\vec{w} = Q\vec{p}$ , where each component  $w_i$  is given by:
$w_i = \sum_{j = 1}^n p_j q_{i,j}.$
Then, for sufficiently large  d , the distribution of  $w_i$  converges in distribution to:
\[
w_i \xrightarrow[]{\mathcal{D}} \mathcal{N}\left(0, \mathbb{E}[p_j^2] \frac{6}{n_i} \right),
\]
which simplifies to Kaiming-He normal initialization with variance  $\sigma^2 = \frac{2}{n_l}$ (where $n_l$ is appropriately chosen for each $i$) in the case where  $p_j \sim U[0,1]$. In particular, the variance of a neuron with fan-in $n_\ell$ is then $\Var\left(\relu{\sum_i^{n_\ell} w_i} \right)=1$.
\end{lemma}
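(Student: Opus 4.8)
The plan is to establish the convergence $w_i \xrightarrow{\mathcal{D}} \mathcal{N}(0, \mathbb{E}[p_j^2]\cdot 6/n_i)$ by conditioning on the randomness in $\vec p$ and applying a central-limit argument to the sum $w_i = \sum_{j \in \mathcal{I}_i} p_j q_{i,j}$, which has exactly $d$ nonzero terms. First I would condition on $\vec p = (p_1,\dots,p_n)$ and on the index set $\mathcal{I}_i$; given these, the summands $p_j q_{i,j}$ are independent (over $j \in \mathcal{I}_i$) mean-zero Gaussians, since each $q_{i,j} \sim \mathcal{N}(0,\sigma_i^2)$ with $\sigma_i^2 = 6/(d n_\ell)$. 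Hence $w_i \mid \vec p, \mathcal{I}_i \sim \mathcal{N}\bigl(0,\ \sigma_i^2 \sum_{j\in\mathcal{I}_i} p_j^2\bigr)$. The conditional variance is $\frac{6}{d n_\ell}\sum_{j\in\mathcal{I}_i} p_j^2$, and since the $p_j$ are i.i.d. with finite second moment, the law of large numbers gives $\frac{1}{d}\sum_{j\in\mathcal{I}_i} p_j^2 \to \mathbb{E}[p_j^2]$ almost surely as $d\to\infty$. Therefore the conditional variance converges to $\frac{6\,\mathbb{E}[p_j^2]}{n_\ell}$, and a standard argument (e.g. via characteristic functions: $\mathbb{E}[e^{it w_i}\mid\vec p] = \exp(-\tfrac12 t^2 \sigma_i^2 \sum_j p_j^2) \to \exp(-\tfrac12 t^2 \cdot 6\mathbb{E}[p_j^2]/n_\ell)$, then dominated convergence to remove the conditioning) yields the claimed convergence in distribution.

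For the specialization to $p_j \sim U[0,1]$, I would just compute $\mathbb{E}[p_j^2] = \int_0^1 x^2\,dx = 1/3$, so the limiting variance becomes $\frac{6}{n_\ell}\cdot\frac13 = \frac{2}{n_\ell}$, which is exactly the Kaiming–He normal variance. Finally, for the variance-one claim about the post-activation, I would invoke the standard He-initialization computation: if $w_i$ are (in the limit) i.i.d. $\mathcal{N}(0, 2/n_\ell)$ and independent of the incoming activations $x_i$ (themselves assumed to have the propagated unit second moment), then $\Var\bigl(\relu{\sum_{i=1}^{n_\ell} w_i x_i}\bigr)$ — or, taking the inputs implicit as in the statement, $\mathbb{E}\bigl[\relu{\sum_i w_i}^2\bigr]$ — equals $\tfrac12\cdot n_\ell \cdot \tfrac{2}{n_\ell} = 1$, using that $\mathrm{ReLU}$ halves the second moment of a symmetric random variable and that the pre-activation is $\mathcal{N}(0,2)$ by the fan-in sum. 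This is precisely the invariance property that motivates the He scaling.

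The main obstacle is making the CLT step fully rigorous uniformly enough to pass to the limit after un-conditioning: the conditional law of $w_i$ is exactly Gaussian, so there is no Lindeberg/Berry–Esseen issue \emph{within} the conditioning, but one must argue that $\sigma_i^2\sum_{j\in\mathcal{I}_i}p_j^2 \to 6\mathbb{E}[p_j^2]/n_\ell$ in a mode strong enough (a.s. or in probability) that the conditional characteristic functions converge and then integrate to the unconditional limit — this is where the i.i.d.\ assumption on $\vec p$ and finiteness of $\mathbb{E}[p_j^2]$ (automatic here since the support is $[0,1]$) are used. A secondary subtlety is bookkeeping the two conventions for $n_\ell$ versus $n_i$ (the fan-in attached to weight $w_i$) and confirming the sampling of $\mathcal{I}_i$ "without replacement" does not affect the limit, which it does not since we only need the empirical second moment of a fixed-size-$d$ subsample of i.i.d.\ variables. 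I do not expect the post-activation variance computation to pose difficulty beyond the routine symmetric-ReLU moment identity.
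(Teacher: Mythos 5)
Your proof is correct and reaches the same limiting distribution, but the route through the limit is genuinely different from the paper's. The paper treats $w_i$ as a sum of $d$ i.i.d.\ mean-zero terms $p_j q_{i,j}$, computes the variance of each term as $\mathbb{E}[p_j^2]\cdot 6/(d n_\ell)$ so that the total variance is exactly $6\,\mathbb{E}[p_j^2]/n_\ell$ for every $d$, and then invokes the Central Limit Theorem directly on that sum. You instead condition on $\vec p$ (and $\mathcal{I}_i$), observe that the conditional law of $w_i$ is \emph{exactly} Gaussian with random variance $\sigma_i^2\sum_{j\in\mathcal{I}_i}p_j^2$, and obtain the limit via the law of large numbers applied to that variance together with dominated convergence of the conditional characteristic functions. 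Your mixture argument is somewhat more careful: it makes explicit where the large-$d$ assumption enters (concentration of the empirical second moment of the $p_j$) and sidesteps the mild triangular-array issue in the paper's CLT invocation (the per-term variance shrinks with $d$, so the classical i.i.d.\ CLT must be applied to the normalized sum). Both arguments buy the same conclusion; the paper's is shorter, yours is more self-contained. You also go further than the paper on the final claim: the paper's proof stops at $\mathbb{E}[p_j^2]=1/3$ and never verifies the post-activation statement, whereas you supply the standard symmetric-ReLU second-moment computation, correctly flagging that the stated identity holds for the second moment $\mathbb{E}[\mathrm{ReLU}(z)^2]$ of the $\mathcal{N}(0,2)$ pre-activation rather than for its variance in the strict sense (the latter equals $1-1/\pi$).
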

The proof can be found in the appendix.

\subsection{\texorpdfstring{Why large $d$ helps}{Why large d helps}}\label{sec:d}
In practice, $Q$ can be very large and difficult to store in the computer's memory. 
$d$ controls the sparsity of the matrix; picking a small $d$ maximises the advantages of sparse matrix computation, both in storage and inference. 
However, a very small $d$ can negatively impact the performance of training. 
In this section we give a few reasons why larger values of $d$ help.
In summary: 
\begin{enumerate}
\item Recall that $\mathcal{I}_i$ the set of non-zero entries in row $i$ of $Q$ with $|\mathcal{I}_i|=d$.
The probability that a weight $w_i = Q_i \vec p$ is zero is $\prod_{j\in \mathcal{I}_i} (1-p_j)$, which generally speaking, decreases with $d$.
\item For small values of $d$, a large proportion of columns ($e^{-d}$) are entirely zeros. 
This affects the rank of $Q$ and implies tat some entries of $\vec{p}$ become ineffective during training, harming the expressivity of the model. 
\item Given a matrix $Q$ for any row $i$, \ \[\max_{\vec p \in [0,1]^n} \E{ |Q_i \vec p| } = \Theta\left(\sqrt{\frac{d}{n_\ell}}\right) ,\]
where $n_\ell$ is the corresponding fan-in value.
We prove this in \Cref{pro:cherrypickD}.
 The possibility of large values for $\vec{w}$ leads to increased flexibility in the model, even at the cost of reducing the magnitude of other parameters. 
\end{enumerate}

\subsubsection{Increasing $d$ decreases the sparsity of $\vec{w}$}
\begin{lemma}
Let $\vec{z} \in \mathbb{R}^n$ be a random vector with $z_j \sim Bern(U(0,1))$, and $Q \in \mathbb{R}^{m\times n}$ be the influence matrix. 
Then the expected number of nonzero entries of $
\vec{w} = Q\vec{z}$ is: 
\[\mathbb{E}(\# \text{nonzero entries of } \vec{w}) = m \left(1-\frac{1}{2^d}\right).\]
\end{lemma}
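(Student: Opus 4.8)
The plan is to compute, for each row index $i$, the probability that $w_i \neq 0$, and then sum over $i$ by linearity of expectation. Writing the number of nonzero entries of $\vec w = Q\vec z$ as $\sum_{i=1}^m \1\{w_i \neq 0\}$ reduces the claim to showing $\Pr{w_i \neq 0} = 1 - 2^{-d}$ for a fixed $i$. Here $w_i = \sum_{j \in \mathcal{I}_i} q_{i,j} z_j$, where $|\mathcal{I}_i| = d$, the entries $\{q_{i,j}\}_{j \in \mathcal{I}_i}$ are independent $\mathcal{N}(0,\sigma_i^2)$ with $\sigma_i^2 = 6/(d n_\ell) > 0$, and $Q$ is independent of the mask $\vec z$.

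I would condition on $\vec z$ and set $S_i = \{ j \in \mathcal{I}_i : z_j = 1 \}$. If $S_i = \emptyset$ then $w_i = 0$ surely; if $S_i \neq \emptyset$ then, conditionally on $\vec z$, $w_i = \sum_{j \in S_i} q_{i,j}$ is a sum of $|S_i| \geq 1$ independent centered Gaussians, hence itself a centered Gaussian with variance $|S_i| \sigma_i^2 > 0$, so $\Pr{w_i = 0 \mid \vec z} = 0$ because a non-degenerate Gaussian has no atoms. Thus $\Pr{w_i \neq 0 \mid \vec z} = \1\{S_i \neq \emptyset\}$ almost surely, and averaging over $\vec z$ gives $\Pr{w_i \neq 0} = \Pr{\exists j \in \mathcal{I}_i : z_j = 1} = 1 - \Pr{\forall j \in \mathcal{I}_i : z_j = 0}$.

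It then remains to evaluate $\Pr{\forall j \in \mathcal{I}_i : z_j = 0}$. Since each $p_j \sim U(0,1)$ i.i.d. and $z_j \mid p_j \sim \mathrm{Bern}(p_j)$, we have $\Pr{z_j = 0} = \E{1 - p_j} = 1/2$, and the events $\{z_j = 0\}$, $j \in \mathcal{I}_i$, are independent, so this probability equals $2^{-d}$. Hence $\Pr{w_i \neq 0} = 1 - 2^{-d}$, and summing over $i = 1, \dots, m$ yields $\E{\#\text{nonzero entries of } \vec w} = m(1 - 2^{-d})$.

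The only step requiring care — and the one I expect to be the main (albeit minor) obstacle — is the almost-sure non-vanishing of $w_i$ when $S_i \neq \emptyset$: this is exactly where the continuous, strictly positive-variance Gaussian randomness of the nonzero entries of $Q$, together with its independence from $\vec z$, is used. For a fixed $0/1$ or otherwise adversarial matrix $Q$ the identity can fail, so the statement is genuinely about the random ensemble. Everything else is bookkeeping with linearity of expectation and independence.
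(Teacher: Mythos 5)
Your proof is correct and follows essentially the same route as the paper: linearity of expectation over the $m$ rows, $\Pr{z_j = 0} = \E{1-p_j} = 1/2$ from the uniform prior, and independence across the $d$ selected indices giving $2^{-d}$. The one place you go beyond the paper is in justifying the step it dismisses as ``clearly'': you correctly observe that $w_i \neq 0$ on the event $S_i \neq \emptyset$ only because the conditional law of $\sum_{j\in S_i} q_{i,j}$ is a non-degenerate Gaussian with no atom at zero, a point worth making explicit since the identity fails for an adversarial fixed $Q$.
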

\begin{proof}
The $j$-th element of $\vec{w}$ is given by 
$w_j = \sum_{k \in S_j} z_j, $
where $S_j$ is the set of indices corresponding to the entries of $j$-th row of $Q$, with $|S_j| = d$. 
Let $p^\star_k \sim U(0,1)$; then, the expected value of $P(z_k = 1) = p_k$. 
Moreover, $\mathbb{E}(p_k) = \frac{1}{2}$.
this entails that the probability that all $z_k$s in $S_j$ are zero is $\frac{1}{2^d}$. 
Clearly, 
$P(w_j \neq 0) = 1 - \frac{1}{2^d}.$
Summing across the $m$ rows returns the desired result.
\end{proof}

\subsubsection{Small $d$s lead to inexpressive $\vec{p}$}
\begin{lemma}\label{lem:columns}
The probability that exactly $k \geq 1$ columns of $Q$ are only zeros is given by: 
\[ P( k \text{ columns of } Q \text{ are empty}) = \frac{{{n}\choose{k}} {{n-k}\choose{d}}^m}{{{n}\choose{d}}^m}. \]
Moreover, the proportion of empty columns for large $m = n$ is $\approx e^{-d}$.
\end{lemma}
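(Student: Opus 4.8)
The plan is to count rather than to reason about the mass function directly. Write $X$ for the number of all-zero columns of $Q$, and recall from the initialisation that the rows of $Q$ are independent, row $i$ placing its $d$ nonzero entries on a uniformly random $d$-subset $\mathcal{I}_i\subseteq[n]$ (one of $\binom{n}{d}$ equally likely subsets); a column $j$ is empty exactly when $j\notin\mathcal{I}_i$ for all $i\in[m]$. First I would fix a column set $S$ with $|S|=k$ and compute $\Pr{\mathcal{I}_i\cap S=\emptyset}$ for a single row: the $d$ nonzero positions must all fall among the $n-k$ columns outside $S$, an event of probability $\binom{n-k}{d}/\binom{n}{d}$. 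By independence of the $m$ rows,
\[
\Pr{\text{every column of }S\text{ is empty}}=\left(\frac{\binom{n-k}{d}}{\binom{n}{d}}\right)^{m}.
\]

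Next I would sum this over the $\binom{n}{k}$ choices of $S$. The resulting sum equals $\E{\binom{X}{k}}$, the expected number of empty $k$-subsets of columns, which gives
\[
\E{\binom{X}{k}}=\frac{\binom{n}{k}\binom{n-k}{d}^{m}}{\binom{n}{d}^{m}},
\]
i.e.\ the displayed expression. In the parameter regime of interest (so that $\binom{n-k}{d}/\binom{n}{d}$ is small, which holds once $d$ is not tiny), this binomial moment agrees to leading order with $\Pr{X=k}$ via a Bonferroni/Poisson-approximation argument; an exact mass function would instead come from inclusion--exclusion over supersets of $S$, namely $\Pr{X=k}=\sum_{\ell\ge k}(-1)^{\ell-k}\binom{\ell}{k}\binom{n}{\ell}\bigl(\binom{n-\ell}{d}/\binom{n}{d}\bigr)^{m}$, whose first term is precisely the stated formula.

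For the asymptotic claim I would specialise to $k=1$ and $m=n$, where $\binom{n-1}{d}/\binom{n}{d}=(n-d)/n=1-d/n$, so $\E{X}=n(1-d/n)^{n}$. Letting $n\to\infty$ with $d$ fixed, $(1-d/n)^n\to e^{-d}$, hence $\E{X}\to n e^{-d}$, and dividing by $n$ the expected fraction of empty columns tends to $e^{-d}$.

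\textbf{Main obstacle.} The only delicate point is the gap between ``exactly $k$ empty columns'' and the binomial-moment quantity that the formula literally equals; I would handle this by reading the statement as the expected count of empty $k$-subsets (equivalently, by justifying the Bonferroni/Poisson approximation in the regime where $d$ is moderate and the per-subset empty probability is small). Everything else --- the single-row counting probability, independence across rows, and the limit $(1-d/n)^{n}\to e^{-d}$ --- is routine.
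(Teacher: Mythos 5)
Your proof takes essentially the same route as the paper's: the per-row avoidance probability $\binom{n-k}{d}/\binom{n}{d}$, independence across the $m$ rows, a sum over the $\binom{n}{k}$ column subsets, and the limit $(1-d/n)^{n}\to e^{-d}$ for the asymptotic claim. You are in fact more careful than the paper: you correctly flag that the displayed formula is the binomial moment $\E{\binom{X}{k}}$ (equivalently, the first inclusion--exclusion term), not the exact probability of \emph{exactly} $k$ empty columns --- a distinction the paper's own counting argument, which multiplies the count of configurations leaving a chosen $k$-set empty by $\binom{n}{k}$ and thereby over-counts configurations with more than $k$ empty columns, silently elides.
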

The proof is in Appendix \ref{app:combinatorics}. From this probability density function we already see that an increasing $d$ decreases the $({n-k\choose d}/{n\choose d})^m$ constant. 
Moreover, in Appendix \ref{app:combinatorics},we also show that the expected proportion of empty columns is approximately $e^{-d}$. 
For $d =1$, approximately $36\%$ of columns are empty, meaning that there are only $0.64n$ effective parameters. 
However, with $d = 10$, the proportion of empty columns is $\approx 0.000045$. 
Picking a $d$ that is not small significantly increases the expressivity of the model.  

\subsubsection{Increasing $d$ increases maximum magnitude of $w_i$}
\begin{proposition}\label{pro:cherrypickD}
   For any $Q$ drawn as described in \Cref{sec:model}, for any $i\in [m]$, we can find a vector $\vec p$ (that depends on $Q$), such that 
   \[\max_{\vec p \in [0,1]^n} \E{ |Q_i \vec p| } = \Theta\left(\sqrt{\frac{d}{n_\ell}}\right) ,\]
where $n_\ell$ is the corresponding fan-in value.
\end{proposition}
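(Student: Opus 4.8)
The plan is to turn the optimization over the box $[0,1]^n$ into an explicit expression involving only the $d$ nonzero entries of row $i$, and then estimate that expression with elementary facts about half-normal random variables. Write $\mathcal{I}_i$ for the support of row $i$, so that $Q_i\vec p = \sum_{j\in\mathcal{I}_i} q_{i,j}p_j$ depends only on the coordinates $p_j$, $j\in\mathcal{I}_i$. Since $\vec p\mapsto Q_i\vec p$ is linear, its extrema over $[0,1]^n$ are attained at cube vertices; choosing $p_j=\1[q_{i,j}>0]$ gives $Q_i\vec p=S_+:=\sum_{j\in\mathcal{I}_i}(q_{i,j})_+$, and choosing $p_j=\1[q_{i,j}<0]$ gives $Q_i\vec p=-S_-$ with $S_-:=\sum_{j\in\mathcal{I}_i}(q_{i,j})_-$, where $(a)_+=\max(a,0)$ and $(a)_-=\max(-a,0)$. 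Hence the maximizing $\vec p$ (which indeed depends on $Q$) is whichever of these two indicator vectors has the larger sum, and $\max_{\vec p\in[0,1]^n}|Q_i\vec p| = \max(S_+,S_-)$.

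It then remains to show $\E{\max(S_+,S_-)}=\Theta(\sqrt{d/n_\ell})$, the expectation being over the Gaussian entries $q_{i,j}\sim\mathcal{N}(0,\sigma_i^2)$ with $\sigma_i^2 = 6/(d n_\ell)$. For the upper bound, $\max(S_+,S_-)\le S_++S_- = \sum_{j\in\mathcal{I}_i}|q_{i,j}|$, and since $\E{|q_{i,j}|}=\sigma_i\sqrt{2/\pi}$ this yields $\E{\max(S_+,S_-)}\le d\,\sigma_i\sqrt{2/\pi}=\sqrt{12d/(\pi n_\ell)}=O(\sqrt{d/n_\ell})$. For the lower bound, use $\max(a,b)\ge\tfrac12(a+b)$, so that $\max(S_+,S_-)\ge\tfrac12\sum_{j\in\mathcal{I}_i}|q_{i,j}|$ and $\E{\max(S_+,S_-)}\ge\tfrac{d\sigma_i}{2}\sqrt{2/\pi}=\Omega(\sqrt{d/n_\ell})$. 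The point of the $\max\ge$ average step is that it needs no control on how many of the $q_{i,j}$ are positive; combining the two bounds gives the claim.

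To justify the ``for any $Q$'' reading one adds a concentration remark: $\sum_{j\in\mathcal{I}_i}|q_{i,j}|$ is a sum of $d$ i.i.d.\ nonnegative variables with mean $\sigma_i\sqrt{2/\pi}$ and variance $(1-2/\pi)\sigma_i^2$, hence total mean $\Theta(\sqrt{d/n_\ell})$ and total variance $\Theta(d\sigma_i^2)=\Theta(1/n_\ell)$, so the ratio variance$/$mean$^2$ is $\Theta(1/d)$. By Chebyshev, for $d$ above an absolute constant $\sum_{j\in\mathcal{I}_i}|q_{i,j}|$ lies in $\Theta(\sqrt{d/n_\ell})$ with high probability, and since $\max(S_+,S_-)$ is sandwiched between $\tfrac12\sum_{j\in\mathcal{I}_i}|q_{i,j}|$ and $\sum_{j\in\mathcal{I}_i}|q_{i,j}|$, so is $\max_{\vec p}|Q_i\vec p|$; for any fixed $d$ the expectation statement holds as is.

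There is no substantial obstacle here: the only things to get right are the reduction to $\max(S_+,S_-)$ (a vertex argument for a linear program on a box) and resisting the urge to argue about the sign split in the lower bound, which the $\max\ge$ mean inequality makes unnecessary. The one mildly delicate point, relevant only if one wants the high-probability version uniformly in small $d$, is the tail behavior for $d=O(1)$, which is handled either by a sharper tail bound or simply by stating the result in expectation.
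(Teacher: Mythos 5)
Your proof is correct and follows essentially the same route as the paper's: choose $p_j$ as the indicator of the sign of $q_{i,j}$, and bound the resulting sum between $\tfrac{d}{2}\sigma\sqrt{2/\pi}$ and $d\sigma\sqrt{2/\pi}$ using the half-normal mean. Your $\max(S_+,S_-)\ge\tfrac12(S_++S_-)$ step is a slightly cleaner way to get the lower bound than the paper's ``assume w.l.o.g.\ a majority of entries are positive'' argument, and the concentration remark is a harmless addition, but the substance is the same.
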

\begin{proof}
  Fix an arbitrary row $i$.
  Note that $\E{q_{i,j} | q_{i,j} > 0}= \sigma \sqrt{\frac{2}{\pi}}$ since this is equivalent to the expected value of the half-normal distribution. 
 By symmetry, $\E{q_{i,j} | q_{i,j} > 0}\leq -\sigma \sqrt{\frac{2}{\pi}}$.
 Consider $Q_i$.
  Assume w.l.o.g there there are more positive than negative entries $q_{i,j}, j\in [n]$ (otherwise, an analogous argument can be made).
  For every value $q_{i,j}$ in $Q_i$ that is positive, choose $p_j = 1$ and choose $p_j=0$ otherwise.
 Now for the upper bound note that there are at most $d$ positive values and hence $\max_{\vec p \in [0,1]^n} \E{ |Q_i \vec p| } \leq d \sigma \sqrt{\frac{2}{\pi}} $.
 Thus, $\max_{\vec p \in [0,1]^n} \E{ |Q_i \vec p| } \geq \frac{d}{2} \sigma \sqrt{\frac{2}{\pi}} $.
 Plugging in the value of $\sigma= \sqrt{\frac{6}{d n_\ell}}$ yields the claim.
\end{proof}

\subsection{\texorpdfstring{Compressing by decreasing $n$}{Compressing by decreasing n}}
While our approach reduces the communication complexity dramatically, it comes with other tradeoffs in computational costs. 
There are two reasons that we an incur a cost to accuracy. 

Firstly, decreasing $n$ leads to fewer effective parameters in the network. 
This means that weights in $\vec{w}$ propagate their gradients back to fewer $\vec{p}$, that have to aggregate across gradients of several weights. 
In particular, a simple calculation shows that: 
$\mathbb{E}(\text{expected nonzero entries of column } j \text{ in } Q ) = \frac{md}{n}.$
These can be interpreted as the numbers of parameters that are shared. 
We see that increasing $d$ and decreasing $n$ both lead to an increase in the expected number of weights affected by one entry of $\vec{p}$. 

Secondly, decreasing $n$ also affects the expressivity of the parameter space. 
When $n = m$, $Q$ is full rank (provided $d$ is sufficiently large) and the relationship between the weights can be represented by a random linear system $Q\vec{p} = \vec{w}$. 
However, whenever $n<m$, $\vec{p}$ is projected in a subspace with dimension at most $\dim(Q\vec{p}) = n$. 

While increasing the $d$ recovers some of the accuracy, as per \Cref{sec:d}, it comes with increased computational costs. 
Normally, a forward pass takes $\Theta(m)$ operations, one per parameter. Crucially, we work with sparse data structures.
In our case, we need to do $d$ multiplications per parameter to calculate the weight vector $\vec w$. Thus, the complexity becomes $\Theta(d m)$.
The extra computational costs come from:
\begin{itemize}
    \item Initialisation: generation of sparse matrix. 
    Algorithm creates $d$ entries per $m$ rows, $\mathcal{O}(md)$.
    \item Forward pass: a sparse CSI vector-matrix multiplication step in each epoch $\mathcal{O}(\text{nnz}(n) d) = \mathcal{O}(nd)$
    \item Backpropagation: an extra step in $\mathcal{O}(nd)$. 
\end{itemize}

In \Cref{sec:compression_Exp} we provide evidence of the tradeoff, whereas in \Cref{sec:federated_exp} we show that in the federated learning setting the performance costs of $n$ are mitigated. 
Overall, these costs are offset by the ability to compress the model, which can be very valueable in federated and mobile applications, and the generalisation benefits provided by the framework, which we discuss in the following section. 

\subsection{Sampling Parameters on Convex Shapes}\label{sec:convex}

\begin{figure}[ht!]
\includegraphics[width=0.35\textwidth]{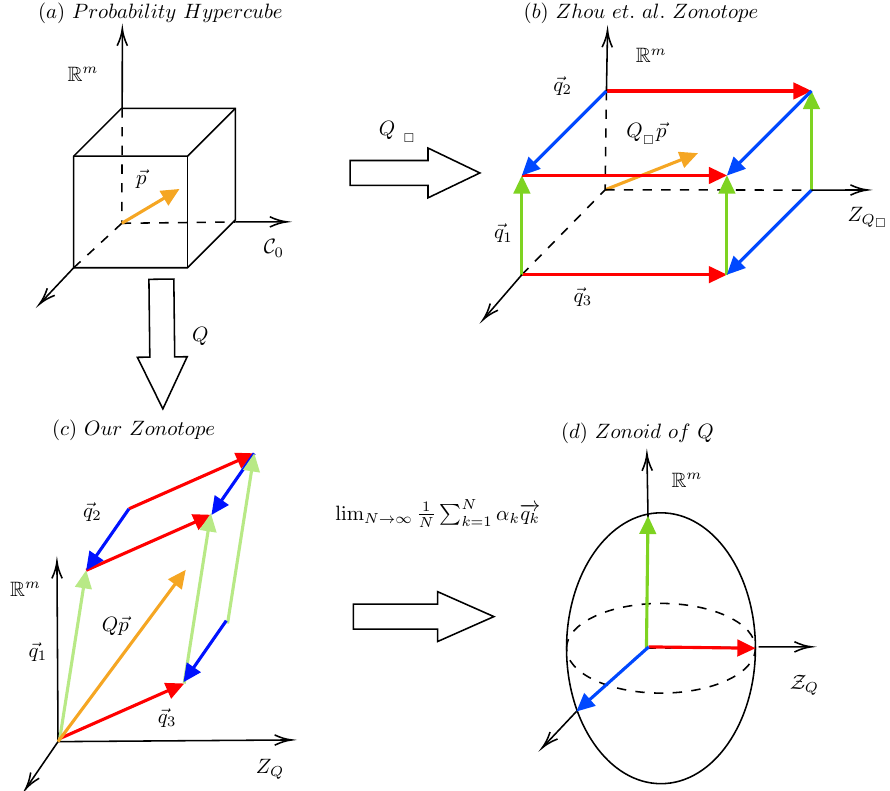}
\caption{We assume $\mathbb{R}^n=\mathbb{R}^m$ 
(a) hypercube realised by the possible values of vector $\vec{p}$,
(b) hyperrectangular zonotope generated by the choice of diagonal influence matrix, as in \cite{zhou},
(c) polytopal zonotope generated by our choice of influence matrix,
(d) zonoid of the zonotope generated from matrix $Q$, which yields an ellipsoid. }
\label{fig:zonotopes}
\Description{We assume $\mathbb{R}^n=\mathbb{R}^m$ 
(a) hypercube realised by the possible values of vector $\vec{p}$,
(b) hyperrectangular zonotope generated by the choice of diagonal influence matrix, as in \cite{zhou},
(c) polytopal zonotope generated by our choice of influence matrix,
(d) zonoid of the zonotope generated from matrix $Q$, which yields an ellipsoid. }
\end{figure}

It is well known that sampling acts as a regularizer and improves the generalisation capabilities of model \cite{srivastava2014dropout}; in this section we provide a novel theoretical bridge that aims to explain this phenomenon granularly. 
Indeed, we provide a characterisation of the generalisation capabilities of model through two measures of the probability distribution the model has converged to: the dimension and the volume of the space we explore via sampling.

Specially, we argue two claims. 
Firstly, that sampling by training improves generalisation, providing a novel link with random convex geometry. 
To do this we show that the training algorithm is sampling vertices of a geometric shape called a zonotope. 
We compute the volume of the zonotope explicitly, as an indicator of the volume of the explored space. 
Secondly, we show that federated learning further improves generalisation by increasing the dimensionality of the space of exploration over which we sample. 

Zonotopes are well-studied mathematical objects in convex geometry.
Their relevance is already known in deep learning, for instance in the context of describing the expressive power of ReLU networks \cite{zhang}. 
In this section, we formalise sampling by training through the formalism of subset search on random zonotopes.
Specifically, we notice that the influence matrix $Q$ induces a convex shape and characterise training-by-sampling as an exploration of the subset of vertices of the convex shape of interest. 
Through this intuition, we provide novel insight on the generalisation capabilities of these training methods. 
This may be the mechanics underpinning the success of approaches stemming from the Lottery Ticket Hypothesis \cite{frankle}, and sampling techniques such commonly used dropout \cite{srivastava2014dropout}. 

Importantly, we show in \Cref{sec:fedswithbenefits} that there are unique benefits to reap by sampled training in a federated learning setting. 
Averaging across $\vec{p}$s of all clients ensures that the dimension of the zonotope that is being explored remains large, even across runs. 

\subsubsection{Volume: Sampling on Random Zonotopes}
The choice of influence matrix $Q$ induces a random convex shape called a \textit{zonotope}. 
Sampling $Q \vec z$ is equivalent to exploring the vertices of this shape. 
In what follows, we make these statements precise and provide an intuition on how the different choices of parameters (the degree $d$ and size $n$ of the matrix) affect the search space. 
We refer the interested reader on the calculus of zonoids and their relation to zonotopes to \cite{mathis}.   
\begin{definition}[Random Zonotope]\label{def:zonotope}
Let $\mathcal{Q} = \{\vec{q}_1,\vec{q}_2,..., \vec{q}_n\}, q_j \sim N(0, \sigma^2), j = 1,..,n$ be a collection of random vectors. 
The random zonotope of $Q$, $Z_\mathcal{Q}$ is the random convex polytope defined by: 
\[ Z_\mathcal{Q} =\left\{ \sum_{j = 1}^n \alpha_j\vec{q_j}, \quad \alpha_j \in [0,1], j =1,..., n\right\}.\]
\end{definition}

A zonotope is a specific type of polytope with symmetries in its two dimensional faces \cite{mathis}. 
A zonoid is a convex shape, that is not necessary polytopal, that is linked to the expected realisation of a zonotope. 
In our case, the zonoid of $Q$ is given by: 
\begin{equation}\label{eq:zonoid}\mathcal{Z}_Q = \lim_{N\rightarrow \infty} \left\{\frac{1}{N} \sum_{k = 1}^N \alpha_k \vec{q_k}, \alpha_k \in [0,1], k= 1,..., N \right\}.\end{equation}   

For example, when $\vec{q_j}$s are standard normal, we have that the zonoid is $\frac{1}{\sqrt{2\pi}}B_n$, where $B_n$ is the unit ball in $\mathbb{R}^n$.
In \Cref{fig:zonotopes}, we provide a visual intuition of the referenced convex shapes and their relationships. 
This connection to zonoids allows us to compute explicitly the expected volume of a zonotope, which represents the size of region that we sample from in the network's parameter space. 

\begin{proposition}\label{prop:gaussian_zonotope}
Let $Q$ be the $n\times n$ influence matrix where each entry $q_{i,j} \sim N(0, 6/dn_i)$.
The volume of the zonotope (as in \cref{def:zonotope})
\[\mathbb{E}(\text{vol}_n({Z}_Q)) =\frac{n!\left({3}/{d}\right)^{{n/2}}}{\Gamma(1+n/2)}\prod_{i=1}^n \sqrt{\frac{1}{n_i}} .\]
\end{proposition}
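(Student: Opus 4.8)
The plan is to compute the expected volume of the random zonotope $Z_Q$ by combining two classical facts: (i) a deterministic formula expressing the $n$-volume of a zonotope generated by $n$ vectors in $\mathbb{R}^n$ as the absolute value of a determinant, and (ii) the distributional identity relating that determinant to a product of independent Gaussians (or equivalently, the known mean volume of a Gaussian zonotope, which is a scaled ball via the zonoid of $Q$). Concretely, if $Z_Q$ is generated by $\vec q_1,\dots,\vec q_n$, then $\mathrm{vol}_n(Z_Q) = |\det[\vec q_1 \,|\, \cdots \,|\, \vec q_n]|$, so I need $\E{\,|\det M|\,}$ where $M$ has independent columns, the $j$-th column being $N(\vec 0, (6/(d n_j)) I_n)$ (using the definition of $\sigma_i^2$ from \Cref{sec:model}; note the variance depends only on the column index through the fan-in $n_j$, consistent with the statement). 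Pulling the column scalings out, $M = \widetilde M \cdot \mathrm{diag}\!\big(\sqrt{6/(d n_1)},\dots,\sqrt{6/(d n_n)}\big)$ with $\widetilde M$ a standard Gaussian matrix, hence $\E{\,|\det M|\,} = \Big(\prod_{i=1}^n \sqrt{6/(d n_i)}\Big)\,\E{\,|\det \widetilde M|\,}$.

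The remaining ingredient is the value of $\E{\,|\det \widetilde M|\,}$ for a standard $n\times n$ Gaussian matrix. I would either invoke the classical result directly or derive it via the base-times-height (Gram–Schmidt) decomposition of the determinant: conditioning successively, the $k$-th factor is the norm of the component of the $k$-th column orthogonal to the span of the first $k-1$, which is distributed as $\chi_{n-k+1}$; multiplying the means $\E{\chi_{n-k+1}} = \sqrt{2}\,\Gamma((n-k+2)/2)/\Gamma((n-k+1)/2)$ over $k=1,\dots,n$ telescopes. Matching the resulting expression against the target — rewriting $\big(\prod_i \sqrt{6/(dn_i)}\big)\,\E{|\det\widetilde M|}$ in the form $\frac{n!\,(3/d)^{n/2}}{\Gamma(1+n/2)}\prod_i\sqrt{1/n_i}$ — is then a matter of algebra: factor $\sqrt{6/d} = \sqrt{2}\cdot\sqrt{3/d}$ from each of the $n$ columns to produce $(3/d)^{n/2}$ and an extra $2^{n/2}$ that combines with the $\sqrt 2$'s from the $\chi$-means, and collapse the telescoping Gamma product into $n!/\Gamma(1+n/2)$ using $\Gamma(1+n/2)=\Gamma(1+n/2)$ and the duplication/factorial identities. (One should double-check whether the paper intends $n$ even or handles the $\Gamma$ factors for all $n$; the statement as written treats $\Gamma(1+n/2)$ uniformly, which the derivation respects.)

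An alternative, possibly cleaner route that avoids the $\chi$-bookkeeping is to go through the zonoid directly: the expected volume of a centrally symmetric Gaussian zonotope equals the volume of its associated zonoid scaled appropriately, and for i.i.d.\ standard normal generators the zonoid is $\tfrac{1}{\sqrt{2\pi}}B_n$ (stated in the excerpt after \Cref{eq:zonoid}). One then uses the relation $\E{\mathrm{vol}_n(Z_Q)} = n!\cdot \mathrm{vol}_n(\mathcal Z_Q)$ — the factor $n!$ arising because a zonotope on $n$ generators in $\mathbb R^n$ is a single parallelepiped-type cell whereas the zonoid averaging over $N\to\infty$ directions dilutes by that combinatorial factor — together with $\mathrm{vol}_n(B_n) = \pi^{n/2}/\Gamma(1+n/2)$, and reinserts the per-column variance rescaling $\prod_i \sqrt{6/(dn_i)}\big/\text{(std normal)}$. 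Either way the skeleton is: volume $=$ determinant, determinant $=$ product of independent one-dimensional magnitudes, take expectations, rescale by the column variances, simplify Gammas.

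The main obstacle I anticipate is not any single hard estimate but getting the normalization constants exactly right — in particular pinning down the precise power of $2$ and $\pi$ and the $n!$ prefactor, since the zonotope-to-zonoid passage and the $\chi$-distribution route each hide a factor that is easy to misplace. I would sanity-check the final formula in the $d=1$, $n=1$ case (where $Z_Q$ is a segment of length $|q_{1,1}|$ with $q_{1,1}\sim N(0,6/n_1)$, so $\E{\mathrm{vol}_1} = \sqrt{6/n_1}\cdot\sqrt{2/\pi}$) against the claimed $\frac{1!\,(3)^{1/2}}{\Gamma(3/2)}\sqrt{1/n_1}$, and in a small even case such as $n=2$, to confirm the constants before committing to the general argument.
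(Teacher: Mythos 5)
Your proposal is correct, and your primary route is genuinely different from the paper's. The paper goes entirely through the zonoid: it invokes Vitale's formula $\mathbb{E}[\mathrm{vol}_n(Z_Q)] = n!\,\mathrm{vol}_n(\mathcal{Z}_Q)$, identifies the zonoid of a standard Gaussian matrix as $B_n/\sqrt{2\pi}$, applies the diagonal rescaling $\Sigma=\mathrm{diag}(\sqrt{6/(dn_i)})$ to get an ellipsoid, and computes its volume as $\frac{\pi^{n/2}}{\Gamma(1+n/2)}\,\lvert\det(\Sigma/\sqrt{2\pi})\rvert$ --- i.e.\ exactly your ``alternative, possibly cleaner route.'' Your main route instead notes that a zonotope on $n$ generators in $\mathbb{R}^n$ is the image of the unit cube, so $\mathrm{vol}_n(Z_Q)=\lvert\det Q\rvert$, factors out the scalings, and evaluates $\mathbb{E}\lvert\det\widetilde M\rvert = 2^{n/2}\Gamma\bigl(\tfrac{n+1}{2}\bigr)/\sqrt{\pi}$ via the $\chi$-decomposition; the Legendre duplication identity $\Gamma\bigl(\tfrac{n+1}{2}\bigr)\Gamma\bigl(1+\tfrac{n}{2}\bigr)=2^{-n}\sqrt{\pi}\,n!$ then collapses the constants to exactly the stated $\frac{n!\,(3/d)^{n/2}}{\Gamma(1+n/2)}\prod_i n_i^{-1/2}$ for every $n$ (your worry about parity is unfounded), and your $n=1$ sanity check does pass. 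The determinant route is more elementary and self-contained, needing no zonoid calculus or Vitale's formula, but it hinges on having exactly $n$ generators in $\mathbb{R}^n$ so that the zonotope is a parallelepiped; the paper's zonoid route is the one that would survive a generalization to more generators than dimensions. One small misreading that happens to be harmless: the variance $6/(dn_i)$ is indexed by the \emph{row} $i$ (the fan-in of the neuron targeted by weight $w_i$), not the column, so the correct factorization is $Q=\mathrm{diag}\bigl(\sqrt{6/(dn_i)}\bigr)\widetilde M$ rather than $\widetilde M\,\mathrm{diag}(\cdot)$; since the determinant is multiplicative on either side, your final formula is unaffected.
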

\begin{proof}
By Vitale's formula \cite{vitale}, we have that: 
\[\mathbb{E}(\text{vol}(Z_Q)) = \mathbb{E}\left(\text{vol}_n\left(\sum_{j = 1}^n \vec{q}_j\right)\right) = n! \text{vol}_n (\mathcal{Z}_Q). \]

Using Proposition 5 \cite{mathis2022gaussian}, the zonoid (see \Cref{eq:zonoid}) for the matrix $\varepsilon$ with i.d.d. columns $\varepsilon_j \sim N(0,1)$ is: 
$Z_\varepsilon = \frac{1}{\sqrt{2\pi}} B_n,$
where $B_n$ is the $n$-dimensional ball in $\mathbb{R}^n$. 
From Lemma 2 (and as the author also states at the beginning of proof of Theorem 8) \cite{mathis2022gaussian}, it follows that, the zonoid $\mathcal{Z}_Q$ is a linear transformation of the Gaussian ball in $\mathbb{R}^n$, $Z_Q = B_n\Sigma/\sqrt{2\pi} = B_n \Sigma'$, where $ \Sigma^2 = \text{diag}((0, 6/dn_i )_{i = 1,..., n})$.
This is a dispersion ellipsoid; the volume of this object is well known to be: 
\[ \text{vol}(\mathcal{E}) = {\text{vol}(B_n)}\det(\Sigma') = \frac{\pi^{n/2}}{\Gamma(1+n/2) }{|\det(\Sigma')|},\]
where $B_n$ is the unit ball in $\mathbb{R}^n$ and $\Gamma$ is the gamma function.
The determinant of a square diagonal matrix is given by 
\[\det(\Sigma') =\prod_{i = 1}^n \sqrt{\frac{6}{2dn_i\pi}} = \left(\frac{3}{d\pi}\right)^{n/2} \prod^n_{i=1}\sqrt{\frac{1}{n_i}},\] returning the desired output. 
\end{proof}

This applies directly when $d=m=n$ in the influence matrix $Q$. 
Increasing the sparsity introduces correlation between the columns of $Q$, due to how the $d$ non zero elements are selected, while also introducing zeros in the matrix. 
Therefore we expect the volume to vary in practice; however, we take the theorem to be an indicative value of the size of the ellipsoid in which we sample during training for $d<n$.


\subsubsection{Dimension: The Benefits of Federated Learning}\label{sec:fedswithbenefits}
Training-by-sampling in \cite{zhou} is equivalent to exploring the vertices of a rectangular zonotope, which is a hyperrectangle. 
The distribution on the vertices is updated until one finds an subset of the hyperrectangle where good performance is achieved. 
In \cite{zhou}, the influence matrix is $Q_{\square} = \text{diag}(\vec{q}), \vec{q} \sim N(0, \Sigma)$, where $\Sigma_{i,i} = \sigma_j^2, \Sigma_{i,j} = 0, j \neq i$. 
The supermasks are parametrised by vectors $\vec{p}, p_j \in [0,1]$, from which we sample: $z_j \sim \text{Bern}(p_j)$, so that $\E{w_{\square}} = Q_{\square}p$. 
In otherwords, $\vec{z}$ are sampled vertices of a hypercube, which are rescaled by randomly initialised weights $Q_{\square}$, kept constant in training. 

The probability distribution on the vertices of the unit hypercube is given by: 
$P(\vec{z} = \vec{z}') = \prod_{j = 1}^m z_j'p_j + (1-z_j')(1-p_j).$
This leads to: 
$P(\vec{w} = \vec{w}'| Q_{\square} = Q_{\square}') = \prod_{j = 1}^m q'_{j,j} \left(z_j'p_j + (1-z_j')(1-p_j)\right).$

\begin{definition}[$\tau$-Hypercube]
Let $\mathcal{C}_\tau \subset \mathbb{R}^n$ be a lower dimensional hypercube, given by:
\[\mathcal{C}_\tau = \left\{ \vec{\alpha}\odot \vec{\mathbb{I}}({\vec{p}}) ,  \sum_{j = 1}^n \alpha_j \mathbb{I}_{\{ \tau \leq p_j \leq 1-\tau\}},  \tau \in [0,0.5], \alpha_j \in [0,1]\right\},\] 
where $\vec{\mathbb{I}}({\vec{p}}_j) = \mathbb{I}_{\{\tau \leq p_j \1-\tau \}}$, a vector with indicator variables which which are $1$ if $p_j \in [\tau, 1-\tau]$ and $0$ otherwise. 
\end{definition}

Note that the expected dimension of this hypercube is at initialisation is: 
\[\mathbb{E}(\dim(\mathcal{C}_\tau)) = \mathbb{E}\left(\sum_{j = 1}^n \mathbb{I}_{\{\tau \leq p_j \leq 1-\tau\}}\right) = {n}(1-2\tau).\]

\begin{proposition}[Benefits of Federated Learning]\label{prop:fed_learning}

Let $\vec{p_1}, \vec{p_2}, ..., \vec{p_C}$ for a number of clients $C \in \mathbb{N}$, be a collection of vectors, with $\tau$ hypercubes given by
$\mathcal{C}^{\vec{p_1}}_\tau, ..., \mathcal{C}^{\vec{p_C}}_\tau$. 
Then the hypercube of the average $\vec{\bar{p}}$ has dimension: 
\[ dim(\mathcal{C}^{\vec{\bar{p}}}_\tau) \geq \frac{1}{C} \sum_{ c = 1}^C dim(\mathcal{C}^{\vec{p}_{c}}) \] 
\end{proposition}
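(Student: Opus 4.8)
The plan is to reduce the inequality to a coordinatewise statement and then exploit the convexity of the interval $I:=[\tau,1-\tau]$. For any probability vector $\vec p\in[0,1]^n$, the computation preceding the proposition shows that $\dim(\mathcal{C}^{\vec p}_\tau)=\sum_{j=1}^n \mathbb{I}_{\{p_j\in I\}}$: the dimension just counts the coordinates not yet pushed close to $0$ or $1$. Applying this to each client and swapping the order of summation, the right-hand side of the claim equals $\sum_{j=1}^n \tfrac1C\,\big|\{c:\ (p_c)_j\in I\}\big|$, while the left-hand side equals $\sum_{j=1}^n \mathbb{I}_{\{\bar p_j\in I\}}$ with $\bar p_j=\tfrac1C\sum_{c}(p_c)_j$. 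So it suffices to prove, for each coordinate $j$, the pointwise bound $\mathbb{I}_{\{\bar p_j\in I\}}\ \ge\ \tfrac1C\,\big|\{c:\ (p_c)_j\in I\}\big|$.

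First I would dispose of the extreme case: if $(p_c)_j\in I$ for every $c$, then $\bar p_j$ is a convex combination of points of the convex set $I$, hence $\bar p_j\in I$ and both sides equal $1$. In general the right-hand side is at most $1$, so it is enough to show that whenever at least one client is active on coordinate $j$, so is the average. Here the structure of the update map $f(x)=\max(\min(x,1),0)$ is the lever: a coordinate leaves $I$ only by being clipped toward $0$ or toward $1$, so the inactive clients split into those in $[0,\tau)$ and those in $(1-\tau,1]$, and since one client already contributes a value in $[\tau,1-\tau]$, these opposite-side contributions partially offset in the mean. Quantifying this --- bounding $\bar p_j\ge \tfrac{\tau}{C}$ and $\bar p_j\le 1-\tfrac{\tau}{C}$ as soon as some client is active, and using that $I$ is symmetric about $\tfrac12$ --- yields the pointwise inequality, and summing over $j$ gives the proposition.

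The crux --- and the step I expect to cost the most care --- is exactly this pointwise bound: plain convexity controls only the coordinates on which \emph{all} clients agree, which would give the much weaker estimate $\dim(\mathcal{C}^{\bar p}_\tau)\ge\big|\bigcap_c\{j:(p_c)_j\in I\}\big|$, whereas the claim asks for an \emph{average}. The extra mileage comes from the observation that disagreement is never harmful: averaging a value in $I$ against values clipped to opposite endpoints drags the mean back toward $\tfrac12$, not out of $I$, so coordinates lost by individual clients are recovered in the mean. I expect that making this rigorous will require invoking the endpoint structure of $f$ as above, or --- if one prefers a statement with no clipping assumption --- slightly weakening the left-hand margin from $\tau$ to $\tau/C$, after which the coordinatewise inequality holds verbatim. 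The rest (interchanging the sums, the convex-combination step, and the endpoint case split) is routine.
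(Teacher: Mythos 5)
Your reduction to the coordinatewise inequality $\mathbb{I}_{\{\bar p_j\in I\}}\ \ge\ \tfrac1C\,\bigl|\{c:(p_c)_j\in I\}\bigr|$ with $I=[\tau,1-\tau]$ is the right way to organize the claim, and you have put your finger on exactly the step that does not go through: the assertion that ``whenever at least one client is active on coordinate $j$, so is the average'' is false for $\tau>0$, and with it the proposition as stated. Take $C=2$, $n=1$, $\tau=0.1$, $(p_1)_1=0.05$, $(p_2)_1=0.11$: client $2$ is active, but $\bar p_1=0.08\notin[0.1,0.9]$, so the left-hand side of the proposition is $0$ while the right-hand side is $\tfrac12$. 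The ``offsetting'' intuition fails because nothing forces the inactive clients to sit on opposite ends of $[0,1]$; when they all sit near $0$, a single active client at $p_j=\tau$ is dragged below $\tau$. Your own bounds $\bar p_j\ge\tau/C$ and $\bar p_j\le 1-\tau/C$ are correct but, as you half-concede, they only place $\bar p_j$ in $[\tau/C,\,1-\tau/C]$, which strictly contains $I$. The statements that actually survive are (i) the intersection bound $\dim(\mathcal{C}^{\bar p}_\tau)\ge\bigl|\bigcap_c\{j:(p_c)_j\in I\}\bigr|$, which follows from convexity of the interval $I$ alone; (ii) your proposed repair $\dim(\mathcal{C}^{\bar p}_{\tau/C})\ge\tfrac1C\sum_c\dim(\mathcal{C}^{\vec p_c}_\tau)$, which does hold verbatim by the max-versus-average step; and (iii) the version with the open interval $(0,1)$ in place of $I$ (morally $\tau\to0^+$), where ``some client strictly inside'' really does force the average strictly inside.

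For what it is worth, the paper's own proof has the same defect in different clothing: it applies Jensen's inequality to the function counting the coordinates outside $I$, asserting it is convex. The indicator of the complement of an interval is not convex (compare $x=\tau/2$ and $y=\tau$ with a convex combination just below $\tau$: the left side is $1$, the right side is less than $1$); it is only quasi-convex, and quasi-convexity yields precisely the intersection bound (i), not the averaged bound in the statement. So do not spend more effort steering your argument toward the inequality as written --- it admits the counterexample above and cannot be patched; prove one of the corrected versions instead.
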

\begin{proof}
Let $\#_\tau: \mathbb{R}^n \rightarrow \mathbb{N}$ be the function that counts the number of non-trivial $p_i$ in $\vec{p}$. 
This is a convex function; therefore, by Jensen's inequality: 
$\frac{1}{C}\sum_{c=1}^C \#_\tau(p_{i,c}) \geq \#_\tau\left(\frac{1}{C}\sum_{c=1}^C p_{i,c} \right). $
In other words, the number of non-trivial $p_i$ is larger for averaged $p_i$s. 
This directly entails the inequality since $\#_\tau = n - dim(\mathcal{C_\tau})$. 
\end{proof}

The random zonotope of $Q$ can be expressed as a vector product of sets: 
$Z_Q = Q C_0 = \left\{Q \alpha, \alpha \in C_0 \right\}.$
Together, Proposition \ref{prop:gaussian_zonotope} and Proposition \ref{prop:fed_learning} provide insight on what happens during training-by-sampling. 
At initialisation, we generate an area of exploration via the zonotope of $Q$, $Z_Q$. 
Here, we sample the vertices of $Z_Q$, with probabilities determined by $\vec{p}$ and compute the loss and gradients with our optimisers. 
Then, during training, the entries of $\vec{p}$ become extreme: closer to 0 or 1. 
In turn, this decreases the dimension set of the that we are likely to sample in, which becomes a set of dimension $\dim(C_\tau)$. 
Our algorithm ensures that we find a set of solutions $S = Q C_\tau$ where there is overall good performance. 
We show experimentally in \Cref{sec:sensitivity_exp} that this leads to drastically improved generalisation compared to training without sampling. 

\section{Experiments}\label{sec:experiments}
Our experiments complement the theoretical exposition in \Cref{sec:theory}, confirming the mathematical claims and showcasing the models' capabilities in the federated setting. 
The goal of these experiments is to evaluate the performance of \localmodel and \binmodel, as we compress the number of parameters, forcing the weight sharing scheme defined by the  matrix $Q$. 
The experiments \Cref{sec:compression_Exp} \Cref{sec:federated_exp} provide evidence that the tradeoff between compression and accuracy brought by a choice of small $n$ can be minimised by using sufficiently large $d$ and deploying the \binmodel in the federated setting; \Cref{sec:sensitivity_exp} and \Cref{sec:zhou_exp} support the claim that \localmodel alone can provide benefits in generalisation and performance, even compared to \cite{zhou}.

\paragraph{Experiment Summary}
We run four experimental evaluations. 
First we show the tradeoff between compression factor and performance across different $d$s in \Cref{sec:compression_Exp}. 
As predicted from the theory, increasing $d$s marginally above $1$ increases the accuracy. 
In this experiment we explore the effect of compression. 

Second, we test our framework in the federated learning setting applying compression. 
We run three simulations showing that a factor $32$x compression on architecture size (and $1024$x compression against the standard case). 
This provides evidence of the benefits of sampling in presented in \Cref{sec:convex}. 

Third, we run our experiments to showcase the generalisation properties of our approach. 
We do this by perturbing $\vec{p}$ across the dimensions where $\tau <p_j <1-\tau$ by a Gaussian impulse. 
We provide evidence of robustness of the \localmodel's accuracy to strong perturbations $\vec{p}$ when compared to training the expected network (i.e. training without sampling). 
As predicted in \Cref{sec:convex}, perturbing the non-trivial entries of $\vec{p}$ of the model leads to minimal effect on the network performance, supporting the view that our framework finds regions of strong generalisation. 

Finally, we show evidence that our model is superior to Zhou et. al. \cite{zhou}.
Over very few runs, we show clear advantages in using \localmodel across all choices of $d$. 
We believe this is due to the benefits induced by a larger $d$ \Cref{sec:d} and the generalisation benefits described in \Cref{sec:convex}. 

\paragraph{Experimental Constant}
We report the experimental choices that were present in all experiments to avoid repetition. 
More detail for each experiments is reported in each section. 
The evaluation of basic method was run on a machine with GPU RTX3080 with 12GBs of VRAM, with AMD Ryzen Threadripper 3960X 24-Core CPU and RAM 256GBs.
In all our experiments we use the MNIST dataset and use the framework described in \Cref{sec:model}.
We run each training round for 100 epochs with early stopping, using 10 epochs of patience and a delta of $10^{-4}$. 
All our training is run using Adam optimizer, with momentum 0.9 and varying learning rate. 
Everywhere, we use the standard MNIST data splits with batches of size 128. 
The model’s parameter initialization followed a uniform distribution on $\vec{p}$ and we choose $q_{i,j}$ to be distributed as in \Cref{lem:init}, to recover Kaiming-He initialisation.
The code is reproducible and shared at \textit{{github.anon.com}}, using PyTorch.

We use one of two architectures in each of our experiments: \smallarch or \mnistfc. 
Our compression experiments \Cref{sec:compression_Exp} and sensitivity \Cref{sec:sensitivity_exp} experiments we use \smallarch, which is a feedforward neural networks with two hidden layers and twenty neurons per layer.
The \mnistfc is exactly as the one in Zhou, two hidden layers with three hundred and one hundred neurons respectively.
This architecture is used both in the federated setting \Cref{sec:federated_exp} and in comparison with \citep{zhou}, in \Cref{sec:zhou_exp}.


\subsection{\texorpdfstring{Varying the Compression Factor $m/n$}{Varying the Compression Factor m/n}}\label{sec:compression_Exp}
\paragraph{Setup.}
In this experiment we vary the compression factor in the $\localmodel$.\footnote{Running the experiment in the $\localmodel$ allows us to isolate that impact of the compression rate, with no impact coming from the averaging binary numbers (recall that in $\binmodel$ the clients sends binary numbers).} 
The goal of this experiment is to see the impact $n$ plays on the accuracy.
We train architectures with for 5 levels of $d = 1, 5, 10, 50, 100$ and with compression factor 11 levels of $m/n = 2^{i}, i = 0,...,10$ reported in \Cref{fig:compression}.
For each architecture, we run 5 random seeds (seed = 0,..,4) with learning rate 0.001. 
After a training round, we sample 100 networks and compute mean sampled accuracy. 
We report the average of the sampled accuracy and the standard deviation in the \Cref{fig:compression}.

\paragraph{Analysis.}
We see in \Cref{fig:compression} the trade-off between the compression factor and the accuracy.
It appears that increasing $d$ beyond $5$ makes little difference, which tracks with the mathematical of impact we discuss in \Cref{sec:d}.
For $d=1$ the results are consistently worse. 
The expected accuracy ($\vec w^*= Q \vec p^*$, where $\vec p^*$ is the final probability vector) overall is almost the same as the sampled accuracy ($\vec w^*= Q \vec z^*$, with $\vec z^* \sim \vec p^*$). 
The accuracy seems to follow roughly inverted logarithmic trend, i.e., doubling the compression factor leads to a constant drop. 
In particular, for $d=5$ we see that a $m/n=4$ results in a drop of about on average $5$ with the standard deviation being of about the same order of magnitude. 
The values of the experiment can be found in \Cref{app:extended_compress}. 

\begin{figure}[ht!]
\includegraphics[width=0.3\textwidth]{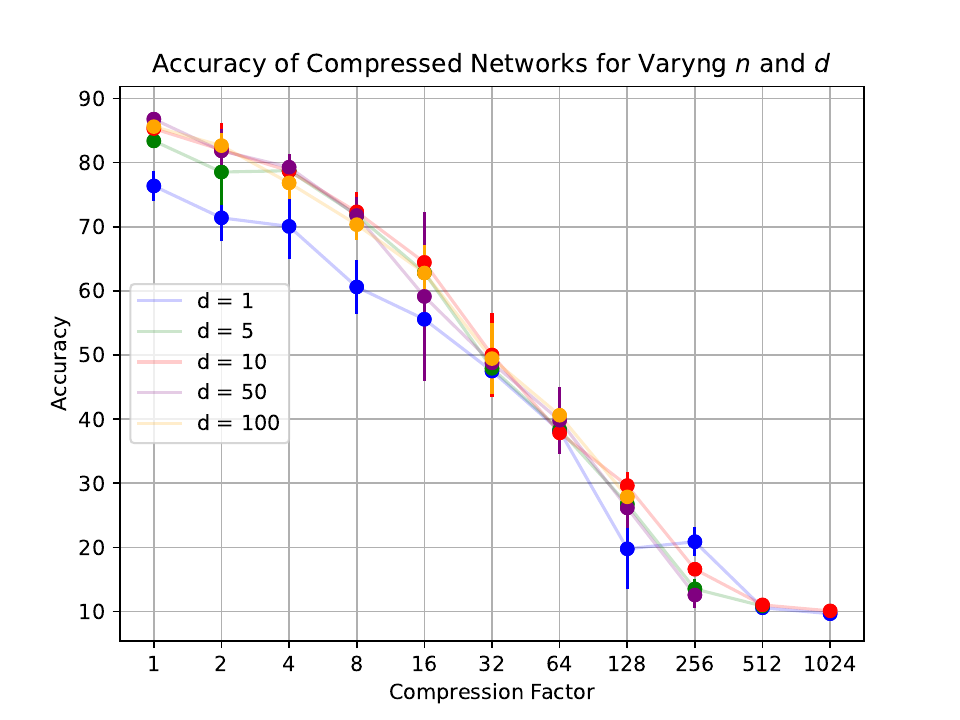}
\caption{Trade-off between compression and accuracy \smallarch in \localmodel for varying levels of $d$.}\label{fig:compression}
\Description{Trade-off between compression and accuracy \smallarch in \localmodel for varying levels of $d$.}
\end{figure}

\subsection{Federated Learning Experiments } \label{sec:federated_exp} \FloatBarrier
\paragraph{Setup}
In the federated learning setup, we ran three simulations with 10 clients and one server. 
Each client was trained over a total of 100 rounds. 
The data was partitioned with a random split. 
In this experiment, we tested the \mnistfc model with training-by-sampling, measuring the accuracy on the expected network. 
The model was initialized with $n = m/i, i = 1,8,32$, where $m = 266610$  and a $d = 10$, with $\vec{p}$ initialized uniformly and learning rate is 0.1, random seed is 1. 
We compute the mean sampled accuracy at each round, together with the standard deviation out of 100 sampled networks. 

\paragraph{Analysis}
Results are displayed in \Cref{fig:isik_compare}. 
Bench-marking against $m/n = 1$, we see that our performance gets virtually no loss in performance (.22\%) for a 8 fold reduction in parameters in the $m/n = 8$ experiment. 
Moreover, in the $m/n = 32$ compression experiment, we recover (2.55\%). 
The metrics are summarised in \Cref{tab:isik_compare} and compared to \cite{isik}. 

\begin{figure}[ht!] \label{fig:isik_compare}
\includegraphics[width=0.5\textwidth]{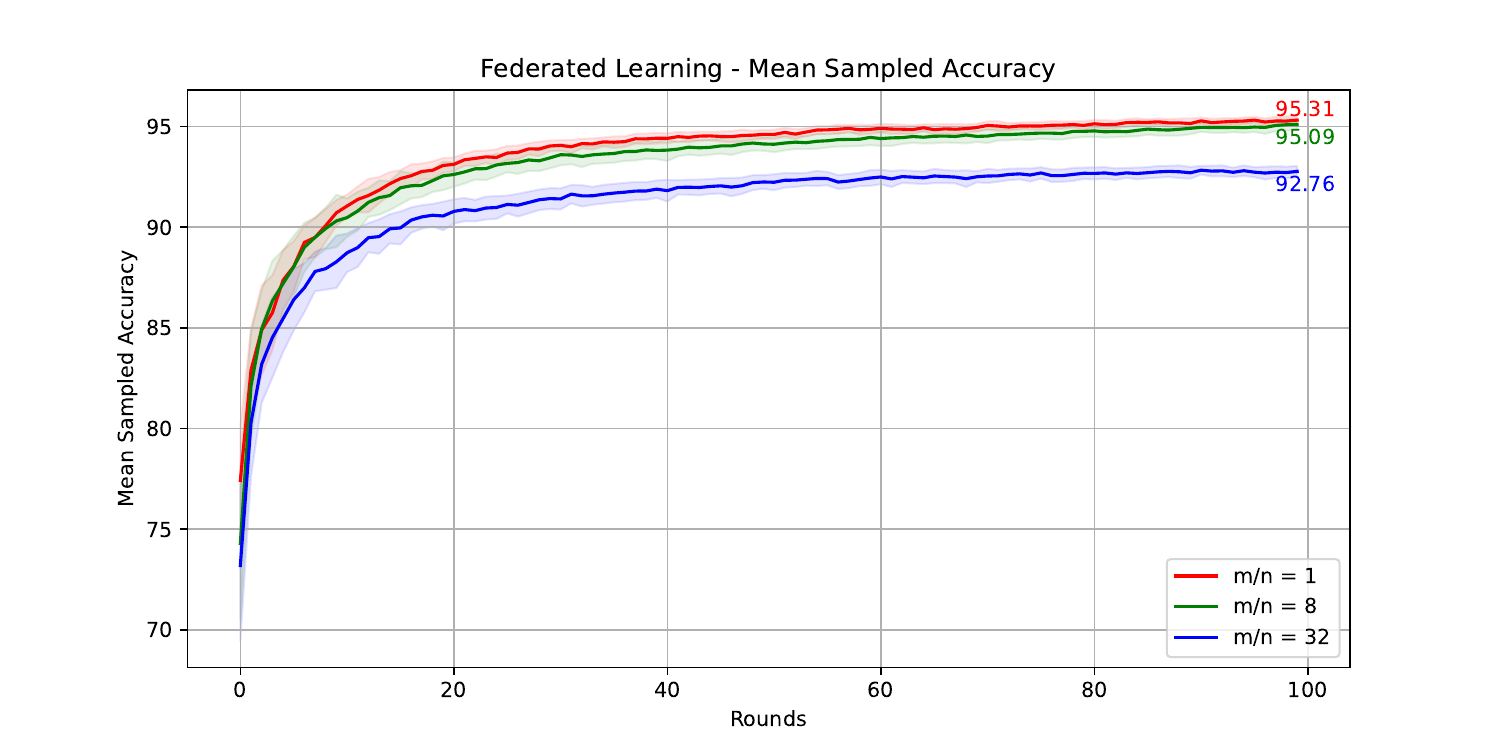}
\caption{Results of training \binmodel in the federated learning framework with varying levels of $d$.}
\Description{Results of training \binmodel in the federated learning framework with varying levels of $d$.}
\label{fig:isik}
\end{figure}

\FloatBarrier

\begin{table}\label{tab:isik_compare}
\begin{tabular}{|c|c|c|c|}
\hline
 & client savings & server savings & test accuracy \\ 
\hline
\cite{isik} & 33.69 (*)  & 1.05 (*) &0.99 \\ 
\hline
\textbf{[us]} $m/n=8$ & 256 & 8 & 0.95 \\ 
\hline
\textbf{[us]} $m/n=32$ & 1024 & 32 & 0.93 \\ 
\hline
\end{tabular}
\caption{All savings are measure by what factor the communication cost decreases per round in comparison to the naive protocol that sends all $m$ parameters as floats. For example, just sends a bit rather than a float results in 32. (*) The bit-rate they achieved was about 0.95 due to artethmetic compression. Note that they use a larger architecture (about 7 times larger).}
\end{table}
Comparing the 1,933,258 parameter ConvNet architecture in Isik et. al. \cite{isik}, we use a 266,610 parameter feedforward (about 7 times smaller) architecture. The clusters we have access to were unable to run their architecture, which is why we only have a test accuracy of $0.95$ (instead of their $0.99$) even without compression. We believe that our results on their architecture would result in even higher accuracies and with higher compression factors  due to their model being much more over-parameterised.

\subsection{Generalisation Properties via Parameter Sensitivity}\label{sec:sensitivity_exp}
\paragraph{Setup}
This experiments aims to verify the claims in \Cref{sec:convex} pertaining to the generalisation capabilities. 
We claimed that the network finds a convex subspace of $\mathbb{R}^m$ where there is good performance. 
To validate our claims, we train \localmodel under two regimes: either via sampling or via Adam on $\vec{w} = Q\vec{p}^{\star}$.
For each we compute performance by perturbing the final learned vector $\vec{p}$ by an $\epsilon_\tau$, where $\epsilon_{\tau,i} \sim N(0,1)$ whenever $\tau\leq p_j\leq 1-\tau$ and is 0 otherwise. 
Performance is computed across 10 perturbations and averaged. 
We compute mean sampled accuracy (across 10 sampled networks) and accuracy for the respective methods. 
We then compute the average sensitivity (change in performance divided by initial performance) and average deviatiation (change in performance divided by the L2 norm of $\eps_\tau$).
We report the standard deviation across the 10 measurements. 

\paragraph{Analysis}
We display the results in \Cref{tab:hypercube_sensitivity}. 
Overall, the clear trend is that the sampled network both performs better and witnesses average sensitivity and deviation smaller by two orders of magnitude, signalling the generalisation benefits of training-by-sampling. 
Moreover, even when $\tau = 0.5$ (and therefore all values $p_j$ are perturbed), we see a gap in robustness between training-by-sampling and traditional training of the expected network: an 11\% drop versus a 62\% drop in performance.

\section{Conclusions}
We introduced a new framework capable of learning on a compressed parameter space while maintaining remarkable accuracy in the federated learning setting. 
Importantly, we introduced a novel link between random convex geometry and training-by-sampling, which we expect will lead to new insight into probabilistic training of networks. 
Using zonotope calculus, we  characterize the sample space of our compressed network and we establish properties regarding the volume and dimensions of the sample space. 
We hope  this new perspective on generalisation and the \textit{density of good solutions} in the parameter space. 

While this work focuses mostly on the \textit{silo} federated learning setting (where one central server receives and spreads messages from clients), we envision future work to focus on a distributed setting, without a central server, testing the performance of \binmodel where the communication between clients follows arbitrary graph patterns. 
On the other hand, further theoretical analysis might give insights into the relationship between the weight sharing literature, for instance, a direct comparison with works such as \cite{chen2015compressing}. 

Finally, in \Cref{sec:sensitivity_exp} we have shown that some entries of $\vec{p}$ are not influential in the forward pass of the model. 
This suggests that further compression of $(Q,\vec{p})$ can be achieved, without loss of accuracy. 
Specifically, we can remove the columns of $Q$ related to trivial $\vec{p}$s, and reduce the rows of $Q$ when weights are summed to 0. 
We conjecture this will decrease further the communication cost in the federated learning setting.

\newpage
 \bibliographystyle{ACM-Reference-Format}
   \bibliography{biblio}

\appendix

\section{Initialisation}\label{app:initialisation}
In this section we study the impact of different initialization.
Concretely we study how the network performance changes if we train the values $\vec p$ directly, without sampling, i.e., we compute the gradient w.r.t. to $\vec p$ rather than not $\vec z$. 
We run this experiment on the \mnistfc, by training the $Q\vec{p}$ traditionally (without sampling). 
We use a learning rate of 0.01 and change initialisation according to the parameters of a beta distribution. 
We take the average over 3 random seeds. 
We display the results in \Cref{fig:integralitygap}. 

The blue line at the top represents the test accuracy of the expected network, given by $\vec{w} = Q\vec{p}$ (without sampling). 
We see that whenever we sample a $\vec{z} \sim \text{Bern}(\vec{p})$, the performance of the network collapses, contrary to what is witness when the neural network is trained via sampling, where the difference between sampled and expected network performance is small.
We refer to this difference as the \textit{integrality gap}. 

Changing initialisation strongly affects the behaviour of the model. 
Small beta parameters imply that the distribution of $\vec{p}$ has high density around $0$ and $1$. 
This artificially decreases the decreases the integrality gap, making the red and blue performance curves closer to each other. 
As we increase the beta parameters, the two curves diverge. 
We conjecture that this is the case because the parameters stay close to their initial distribution when training without sampling: this entails that the difference between $\vec{z}$ and $\vec{p}$ is small at initialisation and remains small throughout training whenever then beta parameters are around 1/10.
We also see that for small beta parameters the variance and max-min gap of the performance of sampled networks (from 100 samples) is minimised. 
Again, this is because the extreme $\vec{p}$s force the $\vec{z}$s to specific values (either 0 or 1) with high probability. 

We also plot the curve of the discretized network. 
This is a network with $\vec{p}_{\circ}$, where $p_{\circ,j} = \text{argmin}_{z = 0,1} |p_j - z|$. 
Interestingly, the performance of the discretized network is enveloped around the curve of the expected and sampled networks for small beta parameters.
However, as we increase the distribution's parameters, we find that the discretised network performs worse than the mean sampled network. 
This is because the values of $\vec{p}$ are increasingly distributed around 1/2 and the discretization moves the network far from the learned parameters.  

\begin{figure}[ht!]
\makebox[0pt][l]{\hspace{-5.5cm}
\includegraphics[width=0.6\textwidth]{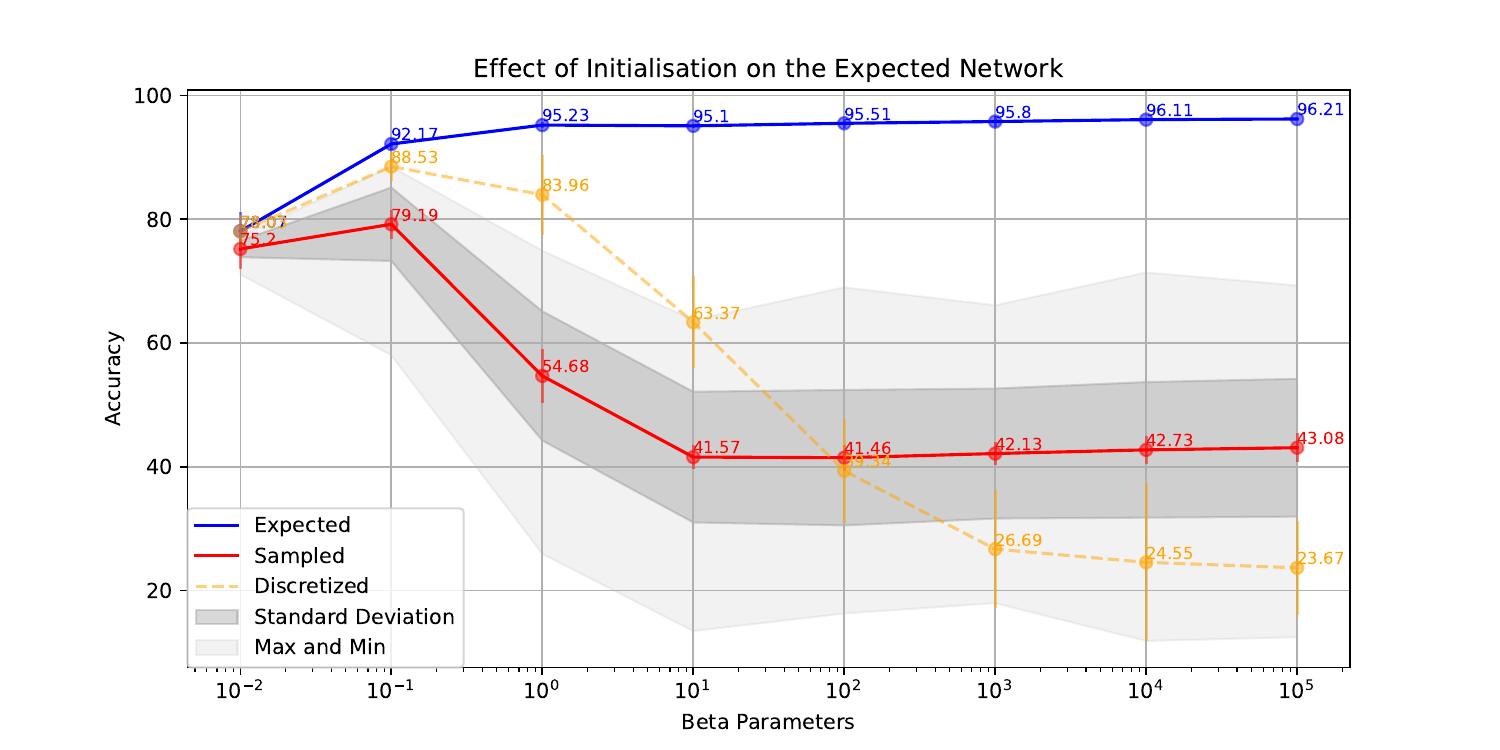}}
\caption{In this figure we study the impact training-by-sampling has. 
The figures shows that if we only train the $\vec p$ directly and then sample a network in the end, it is not robust.
    However, selecting initialization that have abundand extreme values, decreases the integrality gap.}
\Description{In this figure we study the impact training-by-sampling has. 
The figures shows that if we only train the $\vec p$ directly and then sample a network in the end, it is not robust.However, selecting initialization that have abundand extreme values, decreases the integrality gap.}
\label{fig:integralitygap}
\end{figure}

\section{Extended Results: Section~\ref{sec:compression_Exp}} \label{app:extended_compress}

When you perturb all values of $\vec{p}$ indiscriminately ($\tau = 0.5$), performance decreases marginally for sampled networks, but collapses for networks trained regularly.



\subsection{Comparison with Zhou et. al. \cite{zhou}}\label{sec:zhou_exp}
\paragraph{Setup}
For our comparison with Zhou et. al \cite{zhou} we aimed to contrast our performance of $\localmodel$ against the aforementioned approach by varying the degree of $Q$ in powers of 2 ($d = 2, 4, 16, 256$). 
We run each experiment on 5 random seeds, training the model with a learning rate of 0.001.
We sampled on 100 networks at the end of training and capture the best performance (as in \cite{zhou}, providing the standard deviation across the random seeds. 

\paragraph{Analysis}
This setup aimed to evaluate the effectiveness of \localmodel under conditions similar to those used in Zhou’s work, providing a basis for direct comparison.
\Cref{fig:zhou_compare} provides evidence that our method easily performs better than the supermask approach in \cite{zhou}. 
Interestingly, we verify the benefits in increasing to very large values ($d=256$), beyond avoiding small values of $d$. 
We conjecture this is due to the reasons provided in \Cref{sec:d}. 

\begin{figure}[ht!] \label{fig:zhou_compare}
\includegraphics[width=0.4\textwidth]{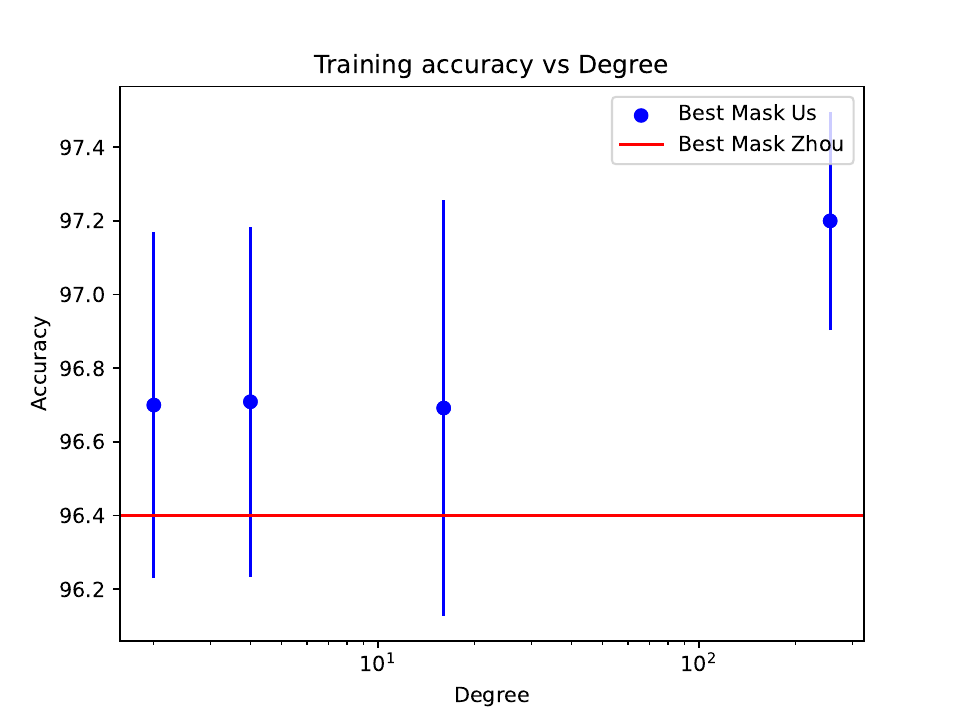}
\caption{Comparison between our method for different degrees $d$ and Zhou et. al. \cite{zhou}; best mask refers to the best sampled $\vec{z}$. }
\Description{Comparison between our method for different degrees $d$ and Zhou et. al. \cite{zhou}; best mask refers to the best sampled $\vec{z}$. }
\label{fig:zhou}
\end{figure}

\section{Missing Proofs}\label{app:combinatorics}
\begin{proof}[Proof of \Cref{lem:columns}]
(Recall that $k\leq d \leq n$). 
The number of possible ways to choose $k\geq 1$ columns is ${n}\choose{k}$. 
Once these have been chosen there are ${n-d}\choose{k}$ ways to fill the row of the column without selecting the $k$ entries that need to empty. 
This needs to be repeated $m$ time, for each row, so that: 
\[ \#(\text{Number of acceptable configurations of } Q )= {{n}\choose{k}} {{n-k}\choose{d}}^m.\]

The total number of ways to pick $d$ entries out of $m$ in $m$ rows is: 
\[ \#(\text{Total configurations of } Q) = {{n}\choose{d}}^m.\]
This entails that: 
$P( k \text{ columns of } Q \text{ are empty}) = \frac{{{n}\choose{k}} {{n-k}\choose{d}}^m}{{{n}\choose{d}}^m}. $

We proceed to show that the number of zero columns in $\approx e^{-d}$. 
By combinatorial argument, for each row, the probability that a specific column is not selected is:

$
P(\text{column $j$ not selected in one row}) ={\dbinom{n - 1}{d}}{\dbinom{n}{d}}^{-1}. 
$

\[
{\dbinom{n - 1}{d}}{\dbinom{n}{d}}^{-1} = \frac{(n - 1)! (n - d)!}{n! (n - 1 - d)!} = \frac{n - d}{n}.
\]

Since each row is independent, the probability that the column is empty across all  $m$  rows is:

\[P(\text{column $j$ is empty}) = \left( \frac{n - d}{n} \right)^m\]

An approximation for the expected number of empty columns (assuming independence of columns) is given by: 
\[
\mathbb{E}(\#(\text{empty columns})) \approx n \left( 1 - \dfrac{d}{n} \right)^{m}\,
\]
which, for large $n = m \gg d$ reduces to:
$
\mathbb{E}(\#(\text{empty columns})) \approx m e^{-d}.
$
\end{proof}
We also have that: 
\begin{align*}
    \mathbb{E}(\text{expected nonzero columns of }Q ) 
    &= \sum_{i = 1}^{n}k \frac{ {n\choose k} {n-k \choose d}^m}{{n \choose d}^m} \\
    & = \frac{1}{{n \choose d}^m} \sum_{i = 1}^d k {n\choose k} {n-k \choose d}^m.
\end{align*}
Whereas, the probability that there are no non-zero columns is: 
\begin{align*}
    P(Q \text{ has no nonzero columns}) 
    &= 1 - \sum_{i = 1}^d\frac{{n\choose k}}{{n \choose d}^m} {n-k \choose d}^m \\
    &= 1 -{n \choose d}^{-m} \sum_{i = 1}^d{n\choose k} {n-k \choose d}^m. \\
\end{align*}

In general, these quantities are really small: most of the genes are effective populated.
Moreover, notice that expected number of nonzero elements for a row is: \\
$\mathbb{E}(\text{expected nonzero entries of column } j \text{ in } Q ) = \frac{md}{n}.$
This quantity can be interpreted as the expected number of weights influenced by a gene.
Smaller $n$ implies a gene has a greater influence on the weights. 
Larger $d$ also makes all genes affect a larger number of weights.

\begin{proof}[Proof of \Cref{lem:init}]
Since \( p_j \) and \( q_{i,j} \):
$
\mathbb{E}[p_j q_{i,j}] = \mathbb{E}[p_j] \cdot \mathbb{E}[q_{i,j}] = 0.
$
Thus, the expected value of \( w_i \) is:
\[
\mathbb{E}[w_i] = \mathbb{E}\left[\sum_{j=1}^n p_j q_{i,j}\right] = \sum_{j=1}^n \mathbb{E}[p_j q_{i,j}] = 0.
\]
Next, we calculate the variance of \( w_i \). Since the \( q_{i,j} \)'s are independent normal random variables, the variance of \( w_i \) is the sum of the variances of the individual terms \( p_j q_{i,j} \):
$
\text{Var}(w_i) = \sum_{j=1}^n \text{Var}(p_j q_{i,j}).
$
Using the fact that $p_j$ and $q_{i,j}$ are independent, we can write:
$
\text{Var}(p_j q_{i,j}) = \mathbb{E}[p_j^2] \cdot \text{Var}(q_{i,j}) = \mathbb{E}[p_j^2] \cdot \frac{6}{d n_i}.
$
Thus, the total variance of \( w_i \) is:
$
\text{Var}(w_i) = \sum_{j=1}^n \mathbb{E}[p_j^2] \cdot \frac{6}{d n_l} = d \cdot \mathbb{E}[p_j^2] \cdot \frac{6}{d n_i}.
$
By the Central Limit Theorem, since $w_i$  is the sum of $ d $ independent random variables, and each $ p_j q_{i,j} $ has mean 0 and finite variance, for large $d$, the distribution of $w_i$ converges to a normal distribution:
\[
w_i \xrightarrow[]{\mathcal{D}} \mathcal{N}\left(0, \mathbb{E}[p_j^2] \cdot \frac{6}{n_i} \right).
\]

If \( p_j \sim U[0,1] \), then \( \mathbb{E}[p_j^2] \) is the second moment of the uniform distribution on \([0, 1]\). The second moment of \( U[0,1] \) is:
\[
\mathbb{E}[p_j^2] = \int_0^1 p_j^2 \, dp_j = \frac{1}{3}, 
\]
which yields Kaiming-He normal distribution. 
\end{proof}

\begin{table*}[htbp]
\centering
\caption{Mean Sampled Accuracy for Different $d$s and Compression Factors $m/n$.}
\label{tab:mean_sampled_accuracy}
\begin{tabular}{c|cccccc}
\hline
Weight degree& \multicolumn{6}{c}{Compression Factor} \\
$d$ & 1 & 2 & 4 & 8 & 16 & 32 \\
\hline
100 & $85.60 \pm 2.16$ & $82.63 \pm 2.43$ & $76.83 \pm 2.93$ & $70.33 \pm 2.98$ & $62.78 \pm 3.29$ & $49.43 \pm 3.32$ \\
50  & $86.77 \pm 1.76$ & $81.83 \pm 2.30$ & $79.28 \pm 2.76$ & $71.75 \pm 2.94$ & $59.12 \pm 3.71$ & $48.76 \pm 3.59$ \\
10  & $85.29 \pm 2.20$ & $81.99 \pm 2.50$ & $78.70 \pm 2.56$ & $72.31 \pm 2.90$ & $64.43 \pm 2.65$ & $49.99 \pm 2.99$ \\
5   & $83.37 \pm 2.42$ & $78.52 \pm 3.40$ & $78.73 \pm 2.81$ & $71.80 \pm 3.17$ & $62.85 \pm 3.02$ & $47.90 \pm 3.33$ \\
1   & $76.35 \pm 3.21$ & $71.37 \pm 4.25$ & $70.05 \pm 3.55$ & $60.60 \pm 3.49$ & $55.56 \pm 3.54$ & $47.48\pm 3.00$ \\
\hline
\end{tabular}
\caption{This plot shows our accuracy with varying compression rate $n/m$. A rate of $1$ means no compression and a rate of $1/r$ compresses the model by a factor of $r$.}
\label{table:compresion}
\end{table*}

\begin{table*}[htbp]
\centering
\caption{Table presenting random sampling in the $C_\tau$ hypercubes for networks trained via sampling versus regular training.}
\label{tab:hypercube_sensitivity}
\begin{tabular}{c|cc|cc|cc}
\hline
$\tau$ & \multicolumn{2}{c|}{Average Accuracy} & \multicolumn{2}{c|}{Average Sensitivity} & \multicolumn{2}{c}{Average Deviation} \\
 & Regular & Sampled & Regular & Sampled & Regular & Sampled \\
\hline
\hline
$0.01$ & $82.85 \pm 2.82$ & $90.29 \pm 0.64$ & $0.13 \pm 0.03$ & $(3.52 \pm 1.61) \times 10^{-3}$ & $0.09 \pm 0.02$ & $(2.50 \pm 1.13) \times 10^{-3}$ \\
$0.10$ & $81.43 \pm 4.01$ &$ 89.67 \pm 0.90$ & $0.14 \pm 0.04$ & $(3.04 \pm 0.93) \times 10^{-3}$ & $0.10 \pm 0.03$ & $(2.13 \pm 0.65) \times 10^{-3}$ \\
$0.20$ & $82.42 \pm 3.77$ & $89.64 \pm 1.12$ & $0.13 \pm 0.04$ & $(2.17 \pm 0.92) \times 10^{-3}$ & $0.10 \pm 0.03$ & $(1.52 \pm 0.63) \times 10^{-3}$ \\
$0.50$ & $20.07 \pm 0.99$ & $78.80 \pm 3.28$ & $0.78 \pm 0.01$ & $0.17 \pm 0.03$ & $0.55 \pm 0.01$ & $0.12 \pm 0.02$ \\
\hline
\hline
\end{tabular}
\end{table*}

\end{document}